\theoremstyle{plain}
\newtheorem{theorem}{Theorem}[section]
\newtheorem{proposition}[theorem]{Proposition}
\newtheorem{lemma}[theorem]{Lemma}
\newtheorem{corollary}[theorem]{Corollary}
\theoremstyle{definition}
\theoremstyle{remark}
\def\eqref#1{equation~\ref{#1}}
\def\floor#1{\lfloor #1 \rfloor}
\def\1{\bm{1}}
\def\mI{{\bm{I}}}
\DeclareMathAlphabet{\mathsfit}{\encodingdefault}{\sfdefault}{m}{sl}
\SetMathAlphabet{\mathsfit}{bold}{\encodingdefault}{\sfdefault}{bx}{n}
\newcommand{\E}{\mathbb{E}}
\newcommand{\R}{\mathbb{R}}
\DeclareMathOperator*{\argmax}{arg\,max}
\DeclareMathOperator*{\argmin}{arg\,min}
\DeclareMathOperator{\Tr}{Tr}
\newcommand{\ie}{i.e.}
\newcommand{\generror}[2]{\mathrm{gen}(#1, #2)}
\newcommand{\MI}[2]{\mathsf{I}(#1 ; #2)}
\newcommand{\CMI}[3]{\mathsf{I}(#1 ; #2 | #3)}
\newcommand{\kSMI}[3]{\mathsf{SI}_{#1}^{(1)}(#2 ; #3)}
\newcommand{\DMI}[3]{\mathsf{I}^{#3}(#1 ; #2)}
\newcommand{\KL}[2]{\mathbf{KL}(#1 \| #2)}
\newcommand{\risk}[1]{\mathcal{R}(#1)}
\newcommand{\riskn}[1]{\widehat{\mathcal{R}}_n(#1)}
\newcommand{\drisk}[2]{\mathcal{R}^{#2}(#1)}
\newcommand{\driskn}[2]{\widehat{\mathcal{R}}^{#2}_n(#1)}
\newcommand{\setW}{\mathrm{W}}
\newcommand{\setZ}{\mathrm{Z}}
\newcommand{\sti}[0]{\mathrm{St}}
\renewcommand*{\eqref}[1]{(\ref{#1})}
\icmltitlerunning{Slicing Mutual Information Generalization Bounds for Neural Networks}
\begin{document}

\twocolumn[
\icmltitle{Slicing Mutual Information Generalization Bounds for Neural Networks}

\icmlsetsymbol{equal}{*}

\begin{icmlauthorlist}
\icmlauthor{Kimia Nadjahi}{mit}
\icmlauthor{Kristjan Greenewald}{ibm}
\icmlauthor{Rickard Br\"uel Gabrielsson}{mit}
\icmlauthor{Justin Solomon}{mit}
\end{icmlauthorlist}

\icmlaffiliation{mit}{MIT}
\icmlaffiliation{ibm}{MIT-IBM Watson AI Lab; IBM Research}

\icmlcorrespondingauthor{Kimia Nadjahi}{kimia.nadjahi@ens.fr}

\icmlkeywords{Machine Learning, ICML}

\vskip 0.3in
]

\printAffiliationsAndNotice{}  %

\begin{abstract}
    The ability of machine learning (ML) algorithms to generalize well to unseen data has been studied through the lens of information theory, by bounding the generalization error with the input-output mutual information (MI), \ie, the MI between the training data and the learned hypothesis. Yet, these bounds have limited practicality for modern ML applications (e.g., deep learning), due to the difficulty of evaluating MI in high dimensions. Motivated by recent findings on the compressibility of neural networks, we consider algorithms that operate by \emph{slicing} the parameter space, \ie, trained on random lower-dimensional subspaces. We introduce new, tighter information-theoretic generalization bounds tailored for such algorithms, demonstrating that slicing improves generalization. Our bounds offer significant computational and statistical advantages over standard MI bounds, as they rely on scalable alternative measures of dependence, \ie, disintegrated mutual information and $k$-sliced mutual information. Then, we extend our analysis to algorithms whose parameters do not need to exactly lie on random subspaces, by leveraging rate-distortion theory. This strategy yields generalization bounds that incorporate a distortion term measuring model compressibility under slicing, thereby tightening existing bounds without compromising performance or requiring model compression. Building on this, we propose a regularization scheme enabling practitioners to control generalization through compressibility. Finally, we empirically validate our results and achieve the computation of non-vacuous information-theoretic generalization bounds for neural networks, a task that was previously out of reach. 
\end{abstract}

\section{Introduction}

Generalization is a fundamental aspect of machine learning, where models optimized on training data are expected to perform similarly on test data. %
Neural networks (NNs), in particular, are able to both perform and generalize well, allowing them to achieve excellent test performance on complex tasks. Despite this empirical success, the architectural factors influencing how well NNs generalize are not fully understood theoretically. This has motivated a substantial body of work using a %
variety of tools to bound their \emph{generalization error} \citep{jiangfantastic}, \ie., the gap between the average loss on training data (\emph{empirical risk}) and its expected loss on a new data \emph{from the same distribution} (\emph{population risk}). The goal is to identify when and why a given model yields a low generalization error, and ultimately, design architectures or training algorithms that guarantee good generalization. Common approaches include PAC-Bayes analysis \citep{dziugaite2017computing} and information theory \citep{xu2017}.

Compression is another topic which has provided a fertile ground for machine learning research. As model architectures have become more and more complex, their evaluation, training and fine-tuning become even more challenging. For instance, large language models are parameterized with billions of parameters. Compressed models, which reduce the number of trainable parameters without significantly deteriorating the performance, have been growing in practical relevance, for example LoRA finetuning of large language models \cite{hu2021lora}. One compression scheme which has found success consists in training NNs on random, lower-dimensional subspaces. NNs compressed that way have been shown to yield satisfying test performances in various tasks while being faster to train \cite{li2018measuring}. This framework has recently been applied by \citet{lotfi2022pac} to significantly improve PAC-Bayes generalization bounds, to the point where they closely match empirically observed generalization error.

A recent line of work argues that there is actually an interplay between \emph{compressible} models and their ability to generalize well. The main conclusion is that one can construct tighter generalization bounds by leveraging compression schemes \cite{arora2018stronger,hsu2021generalization,kuhn2021robustness,sefidgaran2022rate}.

In this paper, we seek further understanding on the generalization ability of learning algorithms trained on random subspaces. %
We introduce new information-theoretic generalization bounds tailored for such algorithms, which are tighter than existing ones. Our bounds demonstrate that algorithms that are ``compressible'' via random slicing have significantly better information-theoretic generalization guarantees. We also find an intriguing connection to Sliced Mutual Information \cite{goldfeld2021sliced,goldfeld2022}, which we explore in learning problems where the information-theoretic generalization bounds are analytically computable. We then leverage the computational and statistical benefits of our sliced approach to empirically compute nonvacuous information-theoretic generalization bounds for various neural networks. 

We further increase the practicality of our approach by using the \emph{rate-distortion}-based framework introduced by \citet{sefidgaran2022rate} to extend our bounds to the setting where the weight vector $W$ only approximately lies on random subspace. This extension applies when the loss is Lipschitz w.r.t.\ the weights, which we can promote using techniques by \citet{bethune2023}. As \citet{sefidgaran2022rate} did for quantization, this allows us to apply generalization bounds based on projection and quantization to networks whose weights are unrestricted. We tighten the bound by using regularization in training to encourage the weights to be close to the random subspace.

\section{Related Work} \label{sec:relatedwork}

\textbf{Compression of neural networks.} Our work focuses on random projection and quantization \cite{hubara2016binarized} as tools for compressing neural networks. Many other compression approaches exist \cite{cheng2017survey}, e.g., pruning \cite{dong2017more,MLSYS2020_d2ddea18}, low-rank compression \cite{wen2017coordinating}, and optimizing architectures via neural architecture search and meta-learning \cite{pham2018efficient,cai2019once,finn2017model}. Further exploring alternative compression approaches from an information-theoretic generalization bound perspective is an interesting avenue for future work.

\textbf{Compressibility and generalization.} A body of work has emerged leveraging various notions of compressibility to adequately explain why neural networks can generalize \cite{arora2018stronger,Suzuki2020Compression,simsekli2020hausdorff,bu2020,hsu2021generalization,kuhn2021robustness,sefidgaran2022rate}. In particular, \citet{bu2020} and \citet{sefidgaran2022rate} connected compressibility and generalization using the rate-distortion theory, which inspired our analysis. \citet{sefidgaran2022rate} derived a set of theoretical generalization bounds, but their applicability to compressible neural networks is unclear. \citet{bu2020} established a generalization bound for a learning model whose weights $W$ are optimized and then compressed into $\hat{W}$. They consider compression as a post-processing technique, while we propose to take compressibility into account during training. Furthermore, \citet{bu2020} applied the rate-distortion theory for a slightly different purpose than ours: to compare the population risk of the compressed model with that of the original model. In contrast, we use it to bound the generalization error of the original model and show that if it is almost compressible on a random subspace, one can obtain significantly tighter bounds than existing information-theoretic ones. 

\textbf{Conditional MI generalization bounds.} Following %
\cite{xu2017} and \cite{bu2019}, which treat the training data as random, \cite{steinke2020reasoning} instead obtain a bound where the dataset is fixed (\ie~\emph{conditioned} on a dataset). This framework assumes that two independent datasets are available, and random Bernoulli indicator variables create a random training set by randomly selecting which of the two datasets to use for the $i$th training point. This approach has the advantage of creating a generalization bound involving the mutual information between the learned weights and a set of \emph{discrete} random variables, in which case the mutual information is always finite. Connections to other generalization bound strategies and to data privacy are established by \cite{steinke2020reasoning}. Followup works tightened these bounds by considering the conditional mutual information between the indicator variables and either the \emph{predictions} \citep{harutyunyan2021information,haghifam2022} or \emph{loss} \citep{wang2023tighter} of the learned model rather than the weights. {A practical limitation of this general approach is that it requires a second dataset (or \emph{supersample}) to compute the conditional mutual information, whereas this extra data could be used to get a better estimate of the test error (hence, the generalization error) directly. Additionally, some of these bounds depend on a mutual information term between low-dimensional variables (e.g., functional CMI-based bounds \citep{harutyunyan2021information}), which can be evaluated efficiently but does not inform practitioners for selecting model architectures.} Exploring slicing for the conditional MI framework is beyond the scope of our paper and is a promising direction for future work.

\textbf{Other generalization bounds for neural networks.} Beyond the information-theoretic frameworks above, many methods bound the generalization of neural networks. Classic approaches in learning theory bound generalization error with complexity of the hypothesis class \citep{bartlett2002rademacher,vapnik2015uniform}, but these fail to explain the generalization ability of highly flexible deep neural network models \citep{zhang2016understanding}. More successful approaches include the PAC-Bayes framework (including \citeauthor{lotfi2022pac}, whose use of slicing inspired our work), margin-based approaches \citep{koltchinskii2002empirical, kuznetsov2015rademacher,chuang2021measuring}, flatness of the loss curve \cite{petzka2021relative}, and even empirically-trained prediction not based on theoretical guarantees \citep{jiang2020neurips,lassance2020ranking, natekar2020representation, schiff2021gi}. Each approach has its own benefits and drawbacks; for instance, many of the tightest predictions are highly data-driven and as a result may provide limited insight into the underlying sources of generalization and how to design networks to promote it. %

\textbf{Our work.} Our approach dramatically improves the tightness of \emph{input-output information-theoretic generalization bounds} for neural networks, which up to this point have not seen practical use. That said, our bounds (unsurprisingly) are still looser than generalization bounds available through some other frameworks, particularly those employing additional data (e.g., data-driven PAC-Bayes priors \citep{lotfi2022pac} or the super-sample of conditional MI bounds \citep{wang2023tighter}) or involving some kind of trained or ad hoc prediction function. Regardless, continuing to improve information-theoretic bounds is a fruitful endeavor that improves our understanding of the connection between machine learning and information theory, and gives insights that can drive algorithmic and architectural innovation.

\section{Preliminaries}
\label{subsec:notations}

Let $\setZ$ be the input data space, $\setW \subseteq \R^D$ the hypothesis space, and $\ell : \setW \times \setZ \to \R_+$ a loss function. For instance, in supervised learning, $\setZ = \{(x, y) \in \mathrm{X} \times \{-1, 1\}\}$ is the set of feature-label pairs, $w \in \setW$ is the parameter vector of a classifier $f_w : \R^D \to \{-1, 1\}$ (e.g., the weights of a neural network), and $\ell(w, (x,y)) = \1_{y \neq f_w(x)}$ is the error made by predicting $y$ as $f_w(x)$. 

Consider a training dataset $S_n \triangleq (Z_1, \dots, Z_n) \in \setZ^n$ consisting of $n$ i.i.d. samples from $\mu$. For any $w \in \setW$, let $\risk{w} \triangleq \E_{Z \sim \mu}[\ell(w, Z)]$ denote the \emph{population risk}, and $\riskn{w} \triangleq \frac1n \sum_{i=1}^n \ell(w, z_i)$ the \emph{empirical risk}. Training a machine learning algorithm aims at minimizing the population risk, \ie, solving $\min_{w \in \setW} \risk{w}$. However, computing $\risk{w}$ is difficult in practice, since the data distribution $\mu$ is generally unknown: one would only observe a finite number of samples from $\mu$. Therefore, a common workaround is \emph{empirical risk minimization}, \ie, $\min_{w \in \setW} \riskn{w}$. A learning algorithm can then be described as the mapping $\mathcal{A} : \setZ^n \to \setW$, where $\mathcal{A}(S_n)$ is a hypothesis learned from $S_n$. We assume that $\mathcal{A}$ is \emph{randomized}: its output $W \triangleq \mathcal{A}(S_n)$ is a random variable distributed from $P_{W|S_n}$. 

The \emph{generalization error} of $\mathcal{A}$ is defined as %
$\generror{\mu}{\mathcal{A}} \triangleq \E_{P_{W|S_n} \otimes \mu^{\otimes n}}[\risk{W} - \riskn{W}]$, where the expectation $\E$ is taken with respect to (w.r.t.) the joint distribution of $(W, S_n)$. %
The higher $\generror{\mu}{\mathcal{A}}$, the more $\mathcal{A}$ overfits when trained on $S_n \sim \mu^{\otimes n}$.

\subsection{Information-theoretic generalization bounds}

{In recent years, there has been a flurry of interest in using theoretical approaches to bound $\generror{\mu}{\mathcal{A}}$ using \emph{mutual information} (MI). }
The most common information-theoretic bound on generalization error was introduced by \citet{xu2017} and depends on the mutual information between the training data $S_n$ and the hypothesis $W$ learned from $S_n$. We recall the formal statement below.%
\begin{theorem}[\citealp{xu2017}] \label{thm:xu}
Assume that $\ell(w, Z)$ is $\sigma$-sub-Gaussian\footnote{A random variable $X$ is $\sigma$-sub-Gaussian ($\sigma > 0$) under $\mu$ if for $t \in \R$, $\E_{\mu}[e^{t(X - \E_{\mu}[X])}] \leq e^{\sigma^2 t^2 / 2}$.} under $Z \sim \mu$ for all $w \in \mathrm{W}$. Then, $|\generror{\mu}{\mathcal{A}}| \leq \sqrt{2\sigma^2~\MI{W}{S_n}/n}$, where $\MI{W}{S_n}$ is the mutual information between $W = \mathcal{A}(S_n)$ and $S_n$.
\end{theorem}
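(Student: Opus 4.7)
The plan is to use the Donsker–Varadhan variational representation of KL divergence together with the sub-Gaussian concentration of the empirical risk, which is the standard change-of-measure strategy underlying input–output MI bounds.

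First I would rewrite the generalization error as a difference between expectations taken under two measures. Setting $f(w, s_n) \triangleq \hat{\mathcal{R}}_n(w) - \mathcal{R}(w)$, the independence of $W$ and $S_n$ under the product measure $P_W \otimes P_{S_n}$ gives $\E_{P_W \otimes P_{S_n}}[f(W, S_n)] = 0$, since $\E_{Z \sim \mu}[\ell(w, Z)] = \mathcal{R}(w)$ for every fixed $w$. On the other hand, $\E_{P_{W, S_n}}[f(W, S_n)] = -\generror{\mu}{\mathcal{A}}$ by definition. So the generalization error is exactly the ``decoupling gap'' for the test function $f$ between the joint and the product of marginals.

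Next I would invoke Donsker–Varadhan: for any $\lambda \in \R$,
\begin{equation*}
\lambda\, \E_{P_{W, S_n}}[f(W, S_n)] \leq \KL{P_{W, S_n}}{P_W \otimes P_{S_n}} + \log \E_{P_W \otimes P_{S_n}}\bigl[e^{\lambda f(W, S_n)}\bigr].
\end{equation*}
The KL term is exactly $\MI{W}{S_n}$. To control the log-moment generating function on the right, I would use the $\sigma$-sub-Gaussian assumption together with independence of $Z_1, \dots, Z_n$: conditionally on $W = w$ (drawn from the marginal and hence independent of $S_n$ under the product measure), $\hat{\mathcal{R}}_n(w) - \mathcal{R}(w) = \frac{1}{n}\sum_i (\ell(w, Z_i) - \mathcal{R}(w))$ is an average of $n$ independent centered $\sigma$-sub-Gaussian terms, hence $\sigma/\sqrt{n}$-sub-Gaussian. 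Integrating against $P_W$ preserves the bound, yielding $\log \E_{P_W \otimes P_{S_n}}[e^{\lambda f(W, S_n)}] \leq \lambda^2 \sigma^2 / (2n)$.

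Plugging back in and recalling that $\E_{P_{W,S_n}}[f(W,S_n)] = -\generror{\mu}{\mathcal{A}}$, I get $-\lambda\, \generror{\mu}{\mathcal{A}} \leq \MI{W}{S_n} + \lambda^2 \sigma^2/(2n)$ for every $\lambda \in \R$. Optimizing over $\lambda > 0$ bounds $-\generror{\mu}{\mathcal{A}}$ by $\sqrt{2\sigma^2\, \MI{W}{S_n}/n}$, and optimizing over $\lambda < 0$ bounds $\generror{\mu}{\mathcal{A}}$ by the same quantity, giving the two-sided bound. The only real subtlety is making sure the sub-Gaussian MGF bound is applied correctly: one needs sub-Gaussianity at each fixed $w$ (which is exactly the hypothesis) and then to exploit independence across the $Z_i$'s before taking the outer expectation in $W$; the rest is routine variational calculus.
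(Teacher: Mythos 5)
Your proof is correct and follows the standard Donsker--Varadhan change-of-measure argument; the paper does not reprove this cited result of Xu and Raginsky, but its own Lemma~\ref{lem:adapted_bu} and the proof of Theorem~\ref{thm:adapted_xu} use exactly the same machinery (decoupling via the joint versus product measure, bounding the cumulant generating function of the empirical risk by $\sigma^2 t^2/(2n)$ using independence of the $Z_i$, then optimizing over the free parameter). Your handling of the two-sided bound via $\lambda>0$ and $\lambda<0$ is also the standard and correct way to obtain the absolute value.
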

The class of $\sigma$-sub-Gaussian losses includes Gaussian-distributed losses $\ell(w, Z) \sim \mathcal{N}(0, \sigma^2)$ and bounded losses satisfying $0\leq \ell(w, Z) \leq 2\sigma$.
For example, the 0-1 correct classification loss satisfies this with $\sigma = 0.5$.
Subsequently, \citet{bu2019} used the averaging structure of the empirical loss to derive a bound that depends on $\MI{W}{Z_i}$. Evaluating MI on each \emph{individual} data point $Z_i$ rather than the entire training dataset $S_n$ has been shown to produce tighter bounds than \citet{xu2017} \citep[{\S}IV]{bu2019}. 
\begin{theorem}[\citealp{bu2019}]
\label{thm:bu}
Assume that $\ell(\tilde{W}, \tilde{Z})$ is $\sigma$-sub-Gaussian under $(\tilde{W}, \tilde{Z}) \sim P_W \otimes \mu$. Then,
$|\generror{\mu}{\mathcal{A}}| \leq (1/n) \sum_{i=1}^n \sqrt{2\sigma^2~\MI{W}{Z_i}}$, where $\MI{W}{Z_i}$ is the mutual information between $W = \mathcal{A}(S_n)$ and $Z_i$.
\end{theorem}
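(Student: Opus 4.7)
The plan is to reduce the theorem to the same Donsker--Varadhan change-of-measure inequality used in \Cref{thm:xu}, but applied \emph{per sample} and then averaged, so that the mutual information appears between $W$ and each individual $Z_i$ rather than between $W$ and the full dataset $S_n$.

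First, I would decompose the generalization error as an average over the $n$ training points. Since $Z_i \sim \mu$ and a fresh test point is independent of the marginal $P_W$ of $W$,
\begin{equation*}
\generror{\mu}{\mathcal{A}} = \frac{1}{n}\sum_{i=1}^n \bigl( \E_{P_W \otimes \mu}[\ell(\tilde W, \tilde Z)] - \E_{P_{W,Z_i}}[\ell(W, Z_i)] \bigr),
\end{equation*}
using that $\E[\risk{W}] = \E_{P_W \otimes \mu}[\ell(\tilde W, \tilde Z)]$ and $\E[\riskn{W}] = (1/n)\sum_i \E_{P_{W, Z_i}}[\ell(W, Z_i)]$ by linearity.

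Next, I would invoke the standard sub-Gaussian change-of-measure lemma: if $f$ is $\sigma$-sub-Gaussian under a reference measure $Q$ and $P \ll Q$, then $|\E_P[f] - \E_Q[f]| \leq \sqrt{2\sigma^2 \KL{P}{Q}}$. The proof applies the Donsker--Varadhan variational representation of KL divergence to $\pm \lambda f$, uses the sub-Gaussian MGF bound $\log \E_Q[e^{\lambda(f - \E_Q f)}] \leq \sigma^2 \lambda^2 / 2$, and optimizes over $\lambda > 0$. I would apply this with $f = \ell$, $Q = P_W \otimes \mu$, and $P = P_{W, Z_i}$; the hypothesis of the theorem is precisely sub-Gaussianity under $Q$, and by definition $\KL{P_{W, Z_i}}{P_W \otimes \mu} = \MI{W}{Z_i}$.

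Combining the two steps with the triangle inequality across the sum yields the stated bound. The only real subtlety, compared to \Cref{thm:xu}, is that sub-Gaussianity is now required under the product measure $P_W \otimes \mu$ rather than for each fixed $w$ under $Z \sim \mu$; this is exactly what allows the change of measure to be carried out against the pair $(W, Z_i)$ rather than against $(W, S_n)$ as a whole, and is what produces the per-sample mutual information on the right-hand side. No other step poses a substantive obstacle; the main conceptual work is in correctly identifying the reference measure so that the sub-Gaussianity assumption matches the KL term that decomposes as $\MI{W}{Z_i}$.
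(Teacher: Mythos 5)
Your proof is correct and is essentially the argument the paper itself relies on: \Cref{thm:bu} is recalled from \citet{bu2019} without proof, but the paper's own \Cref{lem:adapted_bu} and the proof of \Cref{thm:bounded_loss} are exactly your per-sample Donsker--Varadhan change-of-measure argument (reference measure $P_W \otimes \mu$, sub-Gaussian CGF bound, optimization over $t$ giving $\sqrt{2\sigma^2\,\MI{W}{Z_i}}$), just carried out with an additional conditioning on $\Theta$. Your identification of the reference measure so that the KL term equals $\MI{W}{Z_i}$ is the right key step, and the decomposition of the generalization error into individual-sample terms matches the paper's \eqref{eq:proof_gend}.
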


These and other information-theoretic bounds, however, suffer from the fact that the dimension of $W$ can be large when using modern ML models, e.g.\ NNs. Indeed, the sample complexity of MI estimation scales poorly with dimension \citep{paninski2003estimation}. Collecting more samples of $(W, Z_i)$ can be expensive, especially with NNs, as one realization of $W \sim P_{W|S_n}$ requires one complete training run. Moreover, \citet{mcallester20a} recently proved that estimating MI from finite data have important statistical limitations when the underlying MI is large, e.g., hundreds of bits.

\subsection{Random subspace training and sliced mutual information}

{While modern neural networks use large numbers of parameters, common architectures can be highly compressible by \emph{random slicing}: \citet{li2018measuring} found that restricting $W \in \R^D$ during training to lie in a $d$-dimensional subspace spanned by a random matrix (with $d \ll D$) not only provides computational advantages, but does not meaningfully damage the performance of the neural network, for appropriate choice of $d$ (often two orders of magnitude smaller than $D$). They interpreted this fact as indicating \emph{compressibility} of the neural network architecture up to some \emph{intrinsic dimension} $d$, below which performance degrades.}%

Denote by $\sti(d,D) = \{ \Theta \in \R^{D \times d}\,:\,\Theta^\top \Theta = \mI_d \}$ the Stiefel manifold, equipped with the uniform distribution $P_\Theta$. We consider a learning algorithm $\mathcal{A}^{(d)}$ whose hypothesis space is restricted to $\setW_{\Theta,d} \triangleq \{ w \in \R^D\,:\,\exists w' \in \R^d\; \mathrm{s.t.}\;w = \Theta w' \}$, where $\Theta \sim P_\Theta$. Note that $\Theta$ is \emph{not} trained: it is randomly generated from $P_\Theta$ at the beginning of training, and frozen during training. In other words, $\mathcal{A}^{(d)}$ trains the parameters on a random $d$-dimensional subspace of $\R^D$ characterized by $\Theta$. %

{\textbf{Sliced mutual information.} %
The random $d$-dimensional weight subspace is closely related to \emph{slicing}, which projects a high dimensional quantity to a random lower dimensional subspace.
Intriguingly, a recent line of work has considered slicing mutual information, yielding significant sample complexity and computational advantages in high-dimensional regimes. \citet{goldfeld2021sliced} and \citet{goldfeld2022} slice the arguments of MI via random $k$-dimensional projections, thus defining the \emph{$k$-Sliced Mutual Information} (SMI). $\mathsf{SI}_k$ has been shown to retain many important properties of MI \citep{goldfeld2022}, and more importantly, the statistical convergence rate for estimating $\mathsf{SI}_k(X;Y)$ depends on $k$ but not the ambient dimensions $d_x, d_y$. This provides significant advantages over MI, whose computation generally requires an exponential number of samples in $\max(d_x, d_y)$ \citep{paninski2003estimation}. Similar convergence rates can be achieved while slicing in only one dimension. %
Recently, \citet{wongso2023using} empirically connected generalization to SMI %
between the true class labels $Y$ and the hidden representations $T$ of NNs.
}

\section{Information-Theoretic Generalization Bounds for Compressed Models} \label{sec:bounds}

Motivated by the advantageous properties of sliced mutual information and the practical success of training neural networks in random subspaces, we seek an input-output information-theoretic generalization bound for learning algorithms trained on the random subspace $\mathrm{W}_{\Theta, d}$. We will see that improved tightness and statistical properties of such bounds allow these to scale to larger models than possible for the traditional bounds in \Cref{thm:xu,thm:bu}, without significantly damaging the test-time performance of the resulting learned models. To this end, we will derive in this section new information-theoretic bounds on the generalization error for these random subspace algorithms. Using the terminology in \Cref{subsec:notations}, the generalization error of $\mathcal{A}^{(d)}$ is 
\begin{equation}
    \generror{\mu}{\mathcal{A}^{(d)}} = \E_{P_{W' | \Theta, S_n} \otimes P_\Theta \otimes \mu^{\otimes n}}[\risk{\Theta W'} - \riskn{\Theta W'}] \,.
\end{equation}
Note that the expectation is taken w.r.t. to $P_\Theta$, so the error does not depend on $\Theta$. %

A natural strategy to bound the generalization error of $\mathcal{A}^{(d)}$ is by applying \citet{xu2017}: if $\ell(w, Z)$ is $\sigma$-sub-Gaussian under $Z \sim \mu$ for all $(\Theta, w) \in \sti(d, D) \times \mathrm{W}_{\Theta, d}$, then by \Cref{thm:xu}, 
\begin{equation} \label{eq:application_xu}
    | \generror{\mu}{\mathcal{A}^{(d)}} | \leq \sqrt{\frac{2 \sigma^2}{n} \MI{\Theta W'}{S_n}} \,,    
\end{equation} 
where $\Theta W' = \mathcal{A}^{(d)}(S_n)$, $\Theta \sim P_\Theta$. However, this bound does not clearly explain the impact of the intrinsic dimension $d$ on generalization. In addition, the MI term $\MI{\Theta W'}{S_n}$ is hard to estimate in modern machine learning applications since $\Theta W'$ is high-dimensional.

We derive new upper-bounds on $\generror{\mu}{\mathcal{A}^{(d)}}$ to mitigate these issues. Our strategy consists in applying the \emph{disintegration technique} on the hypothesis space $\setW_{\Theta, d}$. %
In our setting, disintegration boils down to deriving a bound for a fixed $\Theta$, then taking the expectation over $P_\Theta$. This yields information-theoretic bounds which are tighter than existing ones and rely on mild assumptions. Moreover, our bounds exhibit an explicit dependence on $d$, which helps capture the impact of compressing the hypothesis space on generalization. Finally, their evaluation is computationally more friendly as it requires estimating MI between lower-dimensional variables. Specifically, our bounds depend on the alternative information theory measure called \emph{disintegrated mutual information} \citep[Definition 1.1]{negrea2019}. The disintegrated MI between $X$ and $Y$ given $U$ is defined as %
$\DMI{X}{Y}{U} = \KL{P_{X,Y | U}}{P_{X|U} \otimes P_{Y|U}}$, where $\textbf{KL}$ denotes the Kullback-Leibler divergence and $P_{X,Y | U}$ the conditional distribution of $(X,Y)$ given $U$, $P_{X|U}$ (respectively, $P_{Y|U}$) the conditional distribution of $X$ (resp., $Y$) given $U$.

\subsection{A first bound on $\generror{\mu}{\mathcal{A}^{(d)}}$} \label{sec:adapted_xu}

We first bound $\generror{\mu}{\mathcal{A}^{(d)}}$ by \emph{disintegrating} the proof of \Cref{thm:xu} \citep{xu2017}. 
\begin{theorem}\label{thm:adapted_xu}
    Assume for all $w' \in \R^d$ and $\Theta \in \sti(d,D)$, $\ell(\Theta w', Z)$ is $\sigma_\Theta$-sub-Gaussian under $Z \sim \mu$, where $\sigma_\Theta$ is a positive constant which may depend on $\Theta$. %
    Then,
	\begin{equation} \label{eq:bound_gen_err}
		| \generror{\mu}{\mathcal{A}^{(d)}} | \leq \sqrt{\frac{2}n} \,\E_{P_\Theta}\left[\sqrt{\sigma_\Theta^2\DMI{W'}{S_n}{\Theta}}\right] \,.
	\end{equation}
\end{theorem}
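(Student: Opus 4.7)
The plan is to mimic the Xu--Raginsky argument, but applied conditionally on the random projector $\Theta$, and then average the resulting conditional bound over $P_\Theta$. Concretely, I would first use the tower property to write
\begin{equation*}
    \generror{\mu}{\mathcal{A}^{(d)}} \;=\; \E_{P_\Theta}\bigl[\, g(\Theta) \,\bigr],
    \qquad
    g(\theta) \;\triangleq\; \E_{P_{W'|\Theta=\theta,S_n}\otimes\mu^{\otimes n}}\bigl[\risk{\theta W'} - \riskn{\theta W'}\bigr].
\end{equation*}
Since $\Theta$ is drawn independently of the data, $P_{S_n\mid\Theta=\theta} = \mu^{\otimes n}$, so $g(\theta)$ is precisely the generalization error of the (conditional) randomized algorithm $S_n \mapsto \theta W'$ with parameter distribution $P_{W'\mid\Theta=\theta,S_n}$.

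For each fixed $\theta$, the assumption that $\ell(\theta w', Z)$ is $\sigma_\theta$-sub-Gaussian for all $w' \in \R^d$ puts us exactly in the setting of \Cref{thm:xu} applied to the conditional algorithm. Following the standard proof, I would invoke the Donsker--Varadhan variational representation of the KL divergence together with the sub-Gaussian moment generating function bound to obtain, for the centered random variable $\frac{1}{n}\sum_{i=1}^n (\risk{\theta W'} - \ell(\theta W', Z_i))$,
\begin{equation*}
    |g(\theta)| \;\leq\; \sqrt{\frac{2\sigma_\theta^2}{n}\, \KL{P_{W',S_n\mid\Theta=\theta}}{P_{W'\mid\Theta=\theta}\otimes \mu^{\otimes n}}} \;=\; \sqrt{\frac{2\sigma_\theta^2}{n}\, \DMI{W'}{S_n}{\Theta=\theta}},
\end{equation*}
using that $P_{S_n\mid\Theta=\theta}=\mu^{\otimes n}$ so that the KL divergence is exactly the disintegrated mutual information $\DMI{W'}{S_n}{\Theta}$ evaluated at $\theta$.

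Finally, to recover the unconditional bound, I would apply Jensen's inequality to pass the absolute value inside the expectation over $\Theta$:
\begin{equation*}
    |\generror{\mu}{\mathcal{A}^{(d)}}| \;=\; \bigl|\E_{P_\Theta}[g(\Theta)]\bigr| \;\leq\; \E_{P_\Theta}\bigl[|g(\Theta)|\bigr] \;\leq\; \E_{P_\Theta}\!\left[\sqrt{\frac{2\sigma_\Theta^2}{n}\, \DMI{W'}{S_n}{\Theta}}\right],
\end{equation*}
which rearranges to \eqref{eq:bound_gen_err}. The routine part is the per-slice Xu--Raginsky bound; the only subtlety, and the step I expect to require some care, is justifying the disintegration itself: one must verify that the regular conditional distributions $P_{W'\mid\Theta=\theta,S_n}$ and $P_{W'\mid\Theta=\theta}$ exist and are measurable in $\theta$ (which follows from standard regularity of conditional distributions on Polish spaces), so that both $g(\theta)$ and $\DMI{W'}{S_n}{\Theta=\theta}$ are well-defined $P_\Theta$-a.s.\ and integrable. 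Once that is in place, the bound is tighter than a direct application of \Cref{thm:xu} because Jensen applied to the concave square root gives $\E_{P_\Theta}[\sqrt{\sigma_\Theta^2\DMI{W'}{S_n}{\Theta}}] \leq \sqrt{\E_{P_\Theta}[\sigma_\Theta^2\DMI{W'}{S_n}{\Theta}}]$, whereas the naive bound contains $\MI{\Theta W'}{S_n} \geq \E_{P_\Theta}[\DMI{W'}{S_n}{\Theta}]$ by the chain rule.
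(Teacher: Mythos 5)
Your proposal is correct and follows essentially the same route as the paper: the paper's proof also conditions on $\Theta$, applies the Donsker--Varadhan/sub-Gaussian CGF argument to the averaged loss $h^\Theta(W',S_n)$ via its Lemma~A.1 (a disintegrated version of the Xu--Raginsky bound), and then takes the expectation over $P_\Theta$. Your extra remarks on the existence and measurability of the regular conditional distributions, and on why the resulting bound is tighter than a direct application of the unconditional bound, are consistent with the paper's discussion (cf.\ its Proposition on tightness in the appendix).
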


Note that since $W \!=\! \Theta W'$ implies $W' \!=\! \Theta^T W$, $\DMI{W'}{S_n}{\Theta}$ has significant parallels with the sliced mutual information \citep{goldfeld2022} with slicing in the first argument only, denoted $\mathsf{SI}^{(1)}_k(W; S_n)$. Sliced mutual information, however, is formulated with $W$ being independent of $\Theta$, which is not generally true in our setting (except in specific regimes such as the Gaussian mean estimation example below).   

\Cref{thm:adapted_xu} holds under a sub-Gaussianity assumption, which is slightly different than the one in \Cref{thm:xu}. It is immediate that the assumption in \Cref{thm:xu} implies the assumption in \Cref{thm:adapted_xu} with $\sigma_\Theta = \sigma$. For instance, consider the supervised learning setting, where for all $w = \Theta w' \in \mathrm{W}_{\Theta, d}$, $\ell(\Theta w', z) = \1_{f_{\Theta w'}(x) \neq y} \leq 1$. Then, by Hoeffding's lemma, \eqref{eq:application_xu} and \eqref{eq:bound_gen_err} both hold, with $\sigma_\Theta = \sigma = 2$. Conversely, if the assumption in \Cref{thm:adapted_xu} holds, then the assumption in \Cref{thm:xu} is met with $\sigma = \sup_{\Theta \in \sti(d, D)} \sigma_\Theta$.

Our bound entails notable advantages over \citet{xu2017}. First, \eqref{eq:bound_gen_err} is tighter than \eqref{eq:application_xu}, since $\E_{P_\Theta}\big[\sqrt{\DMI{W'}{S_n}{\Theta}} \big] \leq \sqrt{\MI{\Theta W'}{S_n}}$ (see \Cref{app:tightness_sn}). This is a natural consequence of the proof technique of \Cref{thm:adapted_xu}, which leverages disintegration, a strategy known to yield tighter characterizations of concave generalization bounds by Jensen's inequality \citep[§4.3]{hellstrom2023}.

Then, our bound is more tractable in regimes where \eqref{eq:application_xu} is fundamentally intractable, e.g., when $D$ is very large. Indeed, common estimators for $\DMI{W'}{S_n}{\Theta}$ exhibit faster convergence rates than $\MI{\Theta W'}{S_n}$ because $W'$ has a lower dimension than $\Theta W'$ ($d \ll D$). For instance, the theoretical guarantees of MINE \cite{belghazi2018mine} clearly support the favorable computational and statistical properties of our bounds. By \citep[Theorem 2]{goldfeld2022}, the approximation error induced by MINE decays rapidly as the dimensionality decreases. Additionally, some of the assumptions of MINE can be relaxed, allowing optimization over a larger class of distribution in lower-dimensional spaces \citep[Remark 6]{goldfeld2022}. One may argue that our bound requires computing the expectation of $\DMI{W'}{S_n}{\Theta}$ over $\Theta \sim P_\Theta$, which makes its evaluation expensive. However, in our experiments, we estimated this expectation with a Monte Carlo approximation and found that increasing the number of samples of $\Theta$ had little practical impact on our bounds. This is consistent with prior work \cite{li2018measuring}, which showed that test performance remains relatively stable across multiple values of $\Theta$ for a fixed $d$, while the choice of $d$ has a greater impact on the quality of solutions.

\subsection{A tighter bound via individual samples}

One limitation of \Cref{thm:adapted_xu} is that $\DMI{W'}{S_n}{\Theta} = +\infty$ if $P_{W', S_n | \Theta}$ is not absolutely continuous w.r.t $P_{W'|\Theta} \otimes \mu^{\otimes n}$, therefore the bound is vacuous. For instance, this happens when $W' = g(S_n)$ where $g$ is a smooth, non-constant deterministic function that may depend on $\Theta$. %
To overcome this issue, we combine disintegration with the \emph{individual-sample} technique introduced by \citet{bu2019}. This allows us to construct a bound in terms of the \emph{individual-sample disintegrated MI}, $\DMI{W'}{Z_i}{\Theta}$.%

\begin{theorem} \label{thm:bounded_loss} %
    Assume that \emph{(i)} for all $w' \in \R^d$ and $\Theta \in \sti(d,D)$, $\ell(\Theta w', Z)$ is $\sigma_\Theta$-sub-Gaussian under $Z \sim \mu$, where $\sigma_\Theta$ is a positive constant which may depend on $\Theta$; or \emph{(ii)} for all $\Theta \in \sti(d, D)$, $\ell(\Theta \tilde{W'}, \tilde{Z})$ is $\sigma_\Theta$-sub-Gaussian under $(\tilde{W'}, \tilde{Z}) \sim P_{W'|\Theta} \otimes \mu$. Then,
    \begin{align}
    \label{eq:genbound_boundloss}
        | \generror{\mu}{\mathcal{A}^{(d)}} | \leq \frac{1}{n} \sum_{i=1}^n \E_{P_\Theta}\left[\sqrt{2\sigma_\Theta^2 \DMI{W'}{Z_i}{\Theta}}\right] \,.
    \end{align}
\end{theorem}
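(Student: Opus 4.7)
The plan is to combine the per-sample decomposition behind \Cref{thm:bu} with the disintegration trick used for \Cref{thm:adapted_xu}, essentially running the proof of \citet{bu2019} conditionally on $\Theta$ and averaging over $P_\Theta$ at the end. First, using linearity of expectation, Fubini, and the fact that each $Z_i\sim\mu$ is independent of $\Theta$ (so $P_{Z_i\mid\Theta}=\mu$), I would write
\[
\generror{\mu}{\mathcal{A}^{(d)}} \;=\; \frac{1}{n}\sum_{i=1}^n \E_{P_\Theta}\Bigl[\E_{P_{W'\mid\Theta}\otimes\mu}[\ell(\Theta W',Z)] - \E_{P_{W',Z_i\mid\Theta}}[\ell(\Theta W',Z_i)]\Bigr].
\]

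Second, for each fixed $\Theta\in\sti(d,D)$ and each $i\in\{1,\dots,n\}$, I would bound the bracketed quantity by $\sqrt{2\sigma_\Theta^2\,\DMI{W'}{Z_i}{\Theta}}$. Under assumption (ii), this follows directly from the standard Donsker--Varadhan-based decoupling estimate (the same lemma underlying \Cref{thm:bu}) applied to the conditional measure given $\Theta$. Under the weaker assumption (i), sub-Gaussianity holds only for each fixed $w'$, so I would instead apply Donsker--Varadhan on the $\setZ$-space with $w'$ frozen to obtain, for every $w'\in\R^d$,
\[
\bigl|\E_\mu[\ell(\Theta w',Z)] - \E_{P_{Z_i\mid W'=w',\Theta}}[\ell(\Theta w',Z_i)]\bigr| \;\leq\; \sqrt{2\sigma_\Theta^2\,\KL{P_{Z_i\mid W'=w',\Theta}}{\mu}},
\]
then integrate over $W'\sim P_{W'\mid\Theta}$ and use Jensen's inequality (concavity of $\sqrt{\cdot}$) together with the symmetry of disintegrated MI in its first two arguments to replace $\E_{W'\mid\Theta}\KL{P_{Z_i\mid W',\Theta}}{\mu}$ by $\DMI{W'}{Z_i}{\Theta}$. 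Applying the triangle inequality across $i$ and pulling the expectation over $\Theta$ outside the sum finishes the argument and yields \eqref{eq:genbound_boundloss}.

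The main subtlety I anticipate is precisely the handling of assumption (i): it postulates sub-Gaussianity of $\ell(\Theta w',Z)$ only for each fixed $w'$, not under the product measure $P_{W'\mid\Theta}\otimes\mu$, so one cannot feed it directly into the Bu--Raginsky decoupling lemma without inflating the constant by a sup over $w'$. The two-step route sketched above (freeze $w'$, apply Donsker--Varadhan on $\setZ$, then average with Jensen) is what lets the same $\sigma_\Theta$ appear in the final bound under either assumption; everything else is a routine disintegrated rerun of the proof of \Cref{thm:bu}.
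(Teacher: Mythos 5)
Your proposal is correct and follows essentially the same route as the paper: the per-sample decomposition of the generalization error conditioned on $\Theta$, followed by the disintegrated Donsker--Varadhan decoupling bound with the sub-Gaussian CGF estimate $\sigma_\Theta^2 t^2/2$ and optimization over $t$ (the paper packages this as \Cref{lem:adapted_bu} and \Cref{thm:genbound} in the appendix). You are in fact more careful than the paper on one point: its written proof only treats assumption \emph{(ii)}, whereas your two-step argument for assumption \emph{(i)} --- the transportation lemma for each fixed $w'$, then Jensen over $P_{W'\mid\Theta}$ using $\E_{P_{W'\mid\Theta}}\big[\KL{P_{Z_i\mid W',\Theta}}{\mu}\big]=\DMI{W'}{Z_i}{\Theta}$ (valid since $Z_i$ is independent of $\Theta$, so $P_{Z_i\mid\Theta}=\mu$; this is the chain-rule identity for MI rather than ``symmetry,'' but the substance is right) --- correctly supplies the missing case with the same constant $\sigma_\Theta$.
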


Assumption \emph{(i)} is not stronger than \emph{(ii)} (e.g., one can adapt \citep[Remark 1]{bu2019}), and conversely (e.g., see Gaussian mean estimation in \Cref{subsec:connection_smi}). Note that \Cref{thm:bounded_loss} under assumption \emph{(ii)} is a particular case of a more general theorem, which we present in \Cref{app:genresult} for readability purposes. A key advantage of \Cref{thm:genbound} is its broader applicability as compared to \Cref{thm:adapted_xu,thm:bounded_loss}. We will illustrate this in \Cref{subsec:connection_smi} on linear regression, where the loss is not sub-Gaussian. 

Thanks to the individual-sample technique, our bound in \eqref{eq:genbound_boundloss} is no worse than the one in \Cref{thm:adapted_xu}. Indeed, assumption \emph{(i)} in \Cref{thm:bounded_loss} is the same as the one in \Cref{thm:adapted_xu}, and $\frac{1}{n} \sum_{i=1}^n \E_{P_\Theta}\left[\sqrt{2\sigma_\Theta^2 \DMI{W'}{Z_i}{\Theta}}\right] \leq \sqrt{\frac{2}{n}} \E_{P_\Theta} \left[ \sqrt{\sigma_\Theta^2 \DMI{W'}{S_n}{\Theta}} \right]$ (\Cref{prop:tightness_zi_sn}). In particular, if $W'$ is a deterministic function of $S_n$, the bound in \eqref{eq:genbound_boundloss} can be non-vacuous as opposed to \eqref{eq:bound_gen_err}.

Thanks to disintegration, the bound in \Cref{thm:bounded_loss} is no worse than the one in \Cref{thm:bu}, \ie, $| \generror{\mu}{\mathcal{A}^{(d)}} | \leq \frac1n \sum_{i=1}^n \sqrt{2 \sigma^2 \MI{\Theta W'}{Z_i}}$. This is justified using similar arguments as in \Cref{sec:adapted_xu} to compare the sub-Gaussian conditions and MI terms: see \Cref{prop:tightness_zi}. Interestingly, we show in \Cref{subsec:connection_smi} that \Cref{thm:bu} cannot be applied in a simple learning problem, because its assumptions do not trivially hold. In contrast, assumption \emph{(ii)} of \Cref{thm:adapted_xu} is easily verified because we condition on $\Theta$. This illustrates that, in addition to providing tighter bounds, disintegration helps formulate alternative sub-Gaussianity assumptions that can be milder.

\subsection{Applications} \label{subsec:connection_smi}

We illustrate more concretely the benefits of our findings over \citet{xu2017} and \citet{bu2019} in terms of tightness and applicability. Moreover, we uncover an interesting connection with the Sliced MI evaluated on $W = \mathcal{A}(S_n)$ and $Z_i$ where only $W$ is projected, \ie, $\kSMI{d}{W}{Z_i} = \E_{P_\Theta}\left[ \DMI{\Theta^\top W}{Z_i}{\Theta} \right]$. %

\textbf{Countable hypothesis space.} Our generalization bounds provide a clear explanation on why algorithms with low intrinsic dimension $d$ are likely to generalize well. Indeed, suppose that for any $\Theta \in \sti(d,D)$ and $w = \Theta w' \in \mathrm{W}_{\Theta, d}$, we have $\|w'\| \leq b_\Theta$, where $\|\cdot\|$ is the Euclidean norm. Then, using the same argumentation as in \citep[{\S}4.1]{xu2017} in \Cref{thm:adapted_xu},
\begin{equation} \label{eq:bound_gen_err_countable}
    | \generror{\mu}{\mathcal{A}^{(d)}} | \leq \sqrt{\frac{2d}{n}} \E_{P_\Theta} \left[ \sqrt{\sigma_\Theta^2 \log(2 b_\Theta \sqrt{dn})} \right] \,.
\end{equation}

We make two key observations based on \eqref{eq:bound_gen_err_countable}. First, the right-hand side (RHS) term decreases as $d$ shrinks. This confirms that algorithms trained on lower-dimensional random subspace tend to generalize better, as observed in practice \citep{li2018measuring}. Then, our bound can help guide architecture choices for practitioners who wish to control the generalization error, thanks to the explicit dependence on $d$, $n$ and $b_\Theta$. Note that given $\Theta$, achieving $\| w' \| \leq b_\Theta$ can easily be achieved in practice by quantizing $w'$.

\textbf{Gaussian mean estimation.} 
We now study the following problem inspired by \citep[Section IV.A]{bu2019}. The training dataset $S_n = (Z_1, \dots, Z_n)$ consists of $n$ i.i.d.\ samples from $\mathcal{N}(\mathbf{0}_D, \mI_D)$, where $\mathbf{0}_D \in \R^D$ is the zero vector. The objective function is $\riskn{w} \triangleq \frac1n \sum_{i=1}^n \| w - Z_i\|^2$. Consider the models $\mathcal{A}$ and $\mathcal{A}^{(d)}$ which minimize $\riskn{w}$ on $\setW = \R^D$ and $\setW_{\Theta, d}$ respectively. Then, $\mathcal{A}(S_n) = \bar{Z}$ and $\mathcal{A}^{(d)}(S_n) = \Theta \Theta^\top \bar{Z}$, where $\bar{Z} \triangleq \frac1n \sum_{i=1}^n Z_i$. Since $W'$ is a deterministic function of $S_n$ given $\Theta$, applying \Cref{thm:adapted_xu} would give a vacuous bound. Instead, we apply \Cref{thm:bounded_loss} and obtain
\begin{align}
    \generror{\mu}{\mathcal{A}^{(d)}} &\leq C_{D,d,n} \sum_{i=1}^n \E_{P_\Theta}\left[\sqrt{\DMI{\Theta^\top \bar{Z}}{Z_i}{\Theta}}\right] \label{eq:mean_estimation_bound_smi} \,, 
\end{align}
with $C_{D, d, n} \triangleq \frac2n \sqrt{d\left(1+\frac{1}{n}\right)^2 + (D-d)}$.  The detailed derivations are in \Cref{app:gmi}. For a fixed pair $(D,n)$, $C_{D, d, n}$ increases as $d$ goes to $D$. By adapting the proof of \citep[Proposition 2.2]{goldfeld2022}, one can show that $\E_{P_\Theta}\big[\sqrt{\DMI{\Theta^\top \bar{Z}}{Z_i}{\Theta}}\big]$ decreases as $d \to 0$, and by the data-processing inequality, for any $d \leq D$ and $\Theta \sim P_{\Theta}$, $\DMI{\Theta^\top \bar{Z}}{Z_i}{\Theta} \leq \MI{\bar{Z}}{Z_i}$. Therefore, the RHS term in \eqref{eq:mean_estimation_bound_smi} accurately captures that compressing the hypothesis space improves generalization. Note that here, $\DMI{\Theta^\top \bar{Z}}{Z_i}{\Theta}$ can actually be computed in closed form: since $Z_i$ and $\Theta^T \bar{Z}$ (given $\Theta$) are Gaussian random variables, and $\Theta^\top \Theta = \mI_d$, we have $\DMI{\Theta^\top \bar{Z}}{Z_i}{\Theta} = \frac{d}{2} \log(\frac{n}{n-1})$. We also show that the bound is sub-optimal, since it scales in $\mathcal{O}(1/\sqrt{n})$ as $n \to +\infty$, and $\generror{\mu}{\mathcal{A}^{(d)}} = 2d/n$.%

\begin{figure}
  \centering
    \includegraphics[width=.42\textwidth]{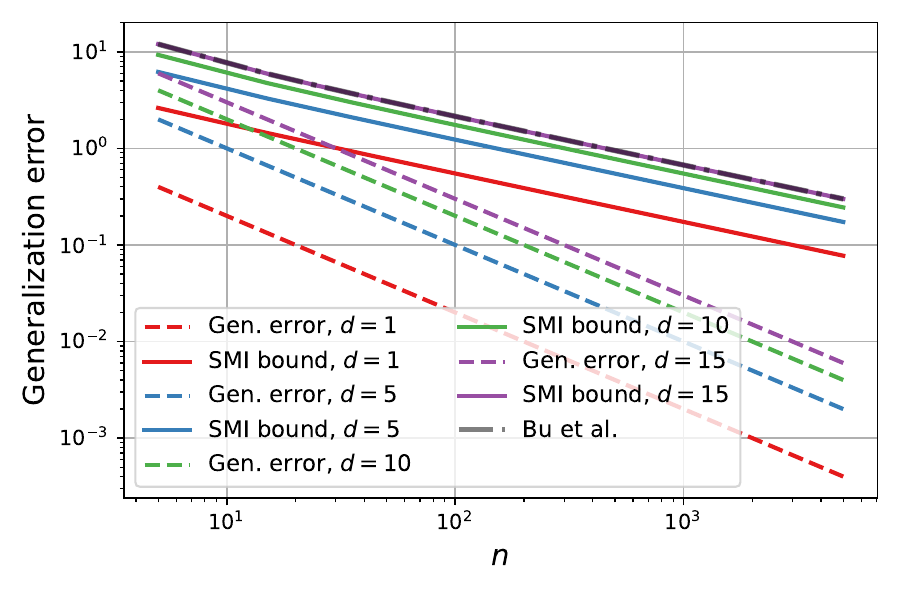}
    \vspace{-3mm}
    \caption{Gaussian mean estimation: generalization error and bound against $n$, for $D=15$, $d \in \{1, 5, 10, 15\}$. Errors and bounds decrease as $d \to 1$. The bound in \citet{bu2019} can only be applied for $d=D$. The scale is log-log.}
    \label{fig:gauss_mean}
    \vspace{-2mm}
\end{figure}
This example shows that our findings allow us to derive generalization bounds where prior work does not apply. Indeed, for $d = D$, our bound boils down to the one derived by \citet{bu2019}; but when $d < D$, their strategy cannot be applied: the distribution of $\ell(\Theta W', Z) = || \Theta W' - Z ||^2$ is unknown, making it highly non-trivial to verify the sub-Gaussian condition by \citet{bu2019} (in particular, $\Theta W'$ is not Gaussian). We overcome this issue via disintegration, \ie, by conditioning on $\Theta$: we prove in \Cref{app:gmi} that $\ell(\Theta \tilde{W'}, \tilde{Z}) = \| \Theta \tilde{W'} - \tilde{Z} \|^2$ is sub-Gaussian \emph{given} $\Theta$ (\ie, under $(\tilde{W}', \tilde{Z}) \sim P_{W' | \Theta} \otimes \mu$), which allows the application of \Cref{thm:bounded_loss}. 
Finally, by Jensen's inequality and $\bar{Z} = W$, $\E_{P_\Theta}\left[\sqrt{\DMI{\Theta^\top \bar{Z}}{Z_i}{\Theta}}\right] \leq \sqrt{\kSMI{d}{W}{Z_i}}$, thus our bound is controlled by SMI. We report $\generror{\mu}{\mathcal{A}^{(d)}}$ and this bound in \Cref{fig:gauss_mean}. In \Cref{app:linreg}, we apply our theorem on linear regression and obtain a bound that also relies on SMI.

\section{Information-Theoretic Generalization Bounds for Compressible Models} \label{subsec:ratedis}

The above bounds require the learned weights $W$ to lie in $\mathrm{W}_{\Theta, d}$. When $d$ is very small, this constraint can be restrictive and significantly deteriorate the performance of the model, as we illustrate in \Cref{sec:exp}. That said, since our MI-based bounds generally increase with increasing $d$, it is important to keep $d$ small. Motivated by recent work applying rate-distortion theory to input-output MI generalization bounds \citep{sefidgaran2022rate}, we establish the following result for \emph{approximately compressible} weights and Lipschitz loss.

\begin{theorem}\label{thm:rateDis}
    Consider $\mathcal{A} : \mathrm{Z}^n \to \mathrm{W}$ s.t. $\mathcal{A}$ may take $\Theta \sim P_\Theta$ into account to output $W$. Assume there exists $C > 0$ s.t. $\ell(\tilde{W}, \tilde{Z}) \leq C$ almost surely. Assume for any $z \in \setZ$, $\ell(\cdot, z) : \setW \to \R_+$ is $L$-Lipschitz, \ie, $\forall (w_1, w_2) \in \setW \times \setW, |\ell(w_1, z) - \ell(w_2, z)| \leq L \rho(w_1, w_2)$, where $\rho$ is a metric on $\setW$. Then,
    \begin{align} 
        \nonumber | \generror{\mu}{\mathcal{A}} | \leq \; &2L\E_{P_{W | \Theta} \otimes P_\Theta} \big[ \rho(W, \Theta \Theta^\top W) \big] \\
        &+ \frac{C}{n} \sum_{i=1}^n \E_{P_\Theta}\left[\sqrt{\frac{\DMI{\Theta^\top W}{Z_i}{\Theta}}{2}}\,\right] \,.\label{eq:ratedisbound}
    \end{align}
    \end{theorem}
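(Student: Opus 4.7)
The plan is to insert the projected weight $\Theta \Theta^\top W$, which lies in the random $d$-dimensional subspace $\mathrm{W}_{\Theta, d}$, as an intermediate hypothesis and reduce the bound to \Cref{thm:bounded_loss}. Write the telescoping identity
\begin{align*}
\risk{W} - \riskn{W}
&= \bigl[\risk{W} - \risk{\Theta \Theta^\top W}\bigr] \\
&\quad + \bigl[\risk{\Theta \Theta^\top W} - \riskn{\Theta \Theta^\top W}\bigr] \\
&\quad + \bigl[\riskn{\Theta \Theta^\top W} - \riskn{W}\bigr],
\end{align*}
take the expectation over the joint law of $(W, \Theta, S_n)$, and apply the triangle inequality. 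This reduces the problem to bounding \emph{(a)} two ``distortion'' terms, and \emph{(b)} the generalization error of the virtually compressed algorithm $\mathcal{A}^{(d)} : (\Theta, S_n) \mapsto \Theta^\top \mathcal{A}(\Theta, S_n) \in \R^d$ whose induced hypothesis is $\Theta \Theta^\top W$.

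For \emph{(a)}, the Lipschitz hypothesis yields $|\ell(W, z) - \ell(\Theta \Theta^\top W, z)| \leq L\,\rho(W, \Theta \Theta^\top W)$ pointwise in $z$. Averaging over $Z \sim \mu$ (for the population risk) or over the empirical distribution (for the empirical risk), then pulling the absolute value inside the outer expectation, each distortion term is bounded by $L \, \E_{P_{W|\Theta} \otimes P_\Theta}[\rho(W, \Theta \Theta^\top W)]$; the two copies combine into the coefficient $2L$.

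For \emph{(b)}, set $W' = \Theta^\top W$ so that $\Theta W' = \Theta \Theta^\top W \in \mathrm{W}_{\Theta, d}$. Boundedness $0 \le \ell(\Theta w', Z) \le C$ together with Hoeffding's lemma makes $\ell(\Theta w', Z)$ $(C/2)$-sub-Gaussian under $Z \sim \mu$ uniformly in $(\Theta, w')$, so assumption \emph{(i)} of \Cref{thm:bounded_loss} holds with $\sigma_\Theta = C/2$. Applying \Cref{thm:bounded_loss} to $\mathcal{A}^{(d)}$ gives the individual-sample bound $\frac{1}{n}\sum_{i=1}^n \E_{P_\Theta}\bigl[\sqrt{2\sigma_\Theta^2 \DMI{\Theta^\top W}{Z_i}{\Theta}}\bigr]$; substituting $2\sigma_\Theta^2 = C^2/2$ produces exactly the MI term in \eqref{eq:ratedisbound}.

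I do not expect any serious obstacle: the argument is essentially a triangle-inequality / telescoping decomposition that offloads the information-theoretic content onto \Cref{thm:bounded_loss}, and uses the Lipschitz hypothesis to absorb the approximation error $W - \Theta \Theta^\top W$ into an additive distortion instead of requiring a new information-theoretic bound for the unrestricted $W$. The only subtlety worth checking is that, since $\mathcal{A}$ is allowed to depend on $\Theta$, the disintegrated MI $\DMI{\Theta^\top W}{Z_i}{\Theta}$ must be interpreted under the full joint law $P_\Theta \otimes \mu^{\otimes n} \otimes P_{W|\Theta, S_n}$, and the distortion expectation in the statement is recovered from this law by marginalizing out $S_n$.
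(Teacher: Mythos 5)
Your proposal is correct and matches the paper's proof in all essentials: the paper likewise introduces the auxiliary algorithm $\mathcal{A}'(S_n) = \Theta\Theta^\top W$, splits $|\generror{\mu}{\mathcal{A}}|$ via the triangle inequality into a distortion term controlled by the Lipschitz hypothesis (giving the factor $2L$) and the generalization error of $\mathcal{A}'$, which it bounds by the individual-sample disintegrated-MI bound with the Hoeffding-derived sub-Gaussian constant $C/2$, yielding exactly the $\frac{C}{n}\sum_i \E_{P_\Theta}[\sqrt{\DMI{\Theta^\top W}{Z_i}{\Theta}/2}]$ term. The only cosmetic difference is that the paper invokes its general Lemma (the disintegrated Donsker--Varadhan bound) directly with the CGF bound $C^2t^2/8$ under the product measure $P_{W|\Theta}\otimes\mu$, rather than routing through assumption \emph{(i)} of \Cref{thm:bounded_loss}; the constants coincide either way.
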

    This result shows a trade-off between distortion and generalization, aligning with prior work on generalization through the rate-distortion theory (e.g., \citet{bu2020}, \citet{sefidgaran2022rate}; see \Cref{sec:relatedwork} for a detailed comparison). In the limit case where $d = D$, we retrieve the bound by \citet{xu2017}. 

    The proof of \Cref{thm:rateDis} consists in considering two models $\mathcal{A} : \mathrm{Z}^n \to \R^D$ and $\mathcal{A}' : \mathrm{Z}^n \to \mathrm{W}_{\Theta, d}$ such that $\mathcal{A}(S_n) = W$ may depend on $\Theta \sim P_\Theta$, and $\mathcal{A}'(S_n) = \Theta \Theta^\top W$. We then use the triangle inequality to obtain $| \generror{\mu}{\mathcal{A}} | \leq | \generror{\mu}{\mathcal{A}} - \generror{\mu}{\mathcal{A}'} | + | \generror{\mu}{\mathcal{A}'} | $. Finally, we bound the first term (the \emph{distortion} term) using the Lipschitz condition and the second term (the \emph{rate} term) using \Cref{thm:bounded_loss}.

    Using similar arguments and \Cref{thm:adapted_xu}, we derive a second rate-distortion bound based on quantization, which does not require estimating MI. %
    
    \begin{theorem} \label{thm:quant_ratedis}
    Assume the conditions of \Cref{thm:rateDis} hold.
    Furthermore, suppose that $\|\Theta^\top W \| \leq M$ for $(W, \Theta) \sim P_{W|\Theta} \otimes P_\Theta$. %
    Consider a function $\mathcal{Q}$ quantizing $\Theta^\top W$ such that $\rho\big(\Theta^\top W, \mathcal{Q}(\Theta^\top W)\big) \leq \delta$. Then,
     \begin{align}
        \nonumber | \generror{\mu}{\mathcal{A}} | \leq&\; 2L\E_{P_{W|\Theta} \otimes P_\Theta} \big[\rho\big(W,\Theta \mathcal{Q}(\Theta^\top W)\big)\big] \nonumber \\
        &+ C  \E_{P_\Theta}\left[\sqrt{\frac{\DMI{\mathcal{Q}(\Theta^\top W)}{S_n}{\Theta}}{2n}}\right] \, \label{eq:ratedisbound_2} \\
        \leq&\; 2L\left(\E_{P_{W|\Theta} \otimes P_\Theta}\big[\rho(W, \Theta \Theta^\top W) \big] + \delta \right) \label{eq:quant_ratedis_0} \\ 
        &+ C\sqrt{\frac{d \log (2M\sqrt{d}/\delta)}{2n}} \label{eq:quant_ratedis} \,.
    \end{align}
    \end{theorem}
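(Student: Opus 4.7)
The plan is to mirror the proof strategy of Theorem~\ref{thm:rateDis}, but replacing the continuous projection $\Theta\Theta^\top W$ by the quantized surrogate $\Theta\mathcal{Q}(\Theta^\top W)$, which takes only finitely many values once $\Theta$ is fixed. Concretely, I would introduce an auxiliary algorithm $\mathcal{A}'' : \mathrm{Z}^n \to \setW$ defined by $\mathcal{A}''(S_n) = \Theta\mathcal{Q}(\Theta^\top W)$, and write
\[
 |\generror{\mu}{\mathcal{A}}| \;\leq\; \bigl|\generror{\mu}{\mathcal{A}} - \generror{\mu}{\mathcal{A}''}\bigr| + \bigl|\generror{\mu}{\mathcal{A}''}\bigr|.
\]
For the first term, the $L$-Lipschitz assumption applied inside both the empirical and population risks gives
$|\risk{W}-\risk{\mathcal{A}''(S_n)}| \leq L\,\rho(W,\Theta\mathcal{Q}(\Theta^\top W))$
and an analogous bound for $\riskn{\cdot}$, so the difference of generalization errors is controlled by $2L\,\E[\rho(W,\Theta\mathcal{Q}(\Theta^\top W))]$.

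For the second term, I would apply Theorem~\ref{thm:adapted_xu} to $\mathcal{A}''$, whose ``low-dimensional'' representation given $\Theta$ is $\mathcal{Q}(\Theta^\top W) \in \R^d$. Since $0\leq \ell \leq C$ almost surely, Hoeffding's lemma gives sub-Gaussianity with $\sigma_\Theta = C/2$, and Theorem~\ref{thm:adapted_xu} yields
\[
 \bigl|\generror{\mu}{\mathcal{A}''}\bigr| \;\leq\; C\,\E_{P_\Theta}\!\left[\sqrt{\tfrac{\DMI{\mathcal{Q}(\Theta^\top W)}{S_n}{\Theta}}{2n}}\right],
\]
which combined with the distortion bound establishes \eqref{eq:ratedisbound_2}.

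To pass to \eqref{eq:quant_ratedis_0}--\eqref{eq:quant_ratedis}, I would first use the triangle inequality on $\rho$ together with the isometry property $\Theta^\top\Theta = \mI_d$ (so that $\rho$, taken to be Euclidean, satisfies $\rho(\Theta u,\Theta v)=\rho(u,v)$). This gives $\rho(W,\Theta\mathcal{Q}(\Theta^\top W)) \leq \rho(W,\Theta\Theta^\top W) + \rho(\Theta^\top W,\mathcal{Q}(\Theta^\top W)) \leq \rho(W,\Theta\Theta^\top W) + \delta$, yielding \eqref{eq:quant_ratedis_0}. For \eqref{eq:quant_ratedis}, I would bound the disintegrated MI by the conditional entropy, $\DMI{\mathcal{Q}(\Theta^\top W)}{S_n}{\Theta} \leq H(\mathcal{Q}(\Theta^\top W)\mid\Theta) \leq \log N$, where $N$ is the number of quantization cells intersecting the Euclidean ball of radius $M$ in $\R^d$.

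The main technical step is a clean covering argument to get the explicit constant $N \leq (2M\sqrt{d}/\delta)^d$: cover the ball of radius $M$ by a uniform $\ell_\infty$-grid of side $\delta/\sqrt{d}$, so that each cell has Euclidean diameter at most $\delta$ (guaranteeing the distortion constraint), and the grid restricted to the enclosing $\ell_\infty$-cube of side $2M$ contains at most $(2M\sqrt{d}/\delta)^d$ cells. Substituting this into the entropy bound and then into \eqref{eq:ratedisbound_2} produces the claimed $C\sqrt{d\log(2M\sqrt{d}/\delta)/(2n)}$ term. The main obstacle is making the covering-to-MI conversion rigorous while respecting the conditioning on $\Theta$, and checking that a valid quantizer $\mathcal{Q}$ attaining distortion at most $\delta$ with exactly this cardinality exists; beyond that, every other step reduces to direct application of Theorem~\ref{thm:adapted_xu} and the Lipschitz/isometry inequalities already used for Theorem~\ref{thm:rateDis}.
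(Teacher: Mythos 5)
Your proposal is correct and follows essentially the same route as the paper's proof: the same auxiliary algorithm $\mathcal{A}'(S_n)=\Theta\mathcal{Q}(\Theta^\top W)$, the same triangle-inequality decomposition with the Lipschitz bound on the distortion term, \Cref{thm:adapted_xu} with $\sigma_\Theta=C/2$ for the rate term, the isometry of $\Theta$ plus $\rho(\Theta^\top W,\mathcal{Q}(\Theta^\top W))\le\delta$ for \eqref{eq:quant_ratedis_0}, and the grid-covering/entropy bound $\DMI{\mathcal{Q}(\Theta^\top W)}{S_n}{\Theta}\le d\log(2M\sqrt{d}/\delta)$ for \eqref{eq:quant_ratedis} (which is exactly the \citet{xu2017} \S4.1 argument the paper invokes). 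Your closing concern is the right one to flag --- the final bound implicitly takes $\mathcal{Q}$ to be (or to be no finer than) the grid quantizer of $\ell_\infty$ cell side $\delta/\sqrt{d}$ --- but this matches the paper's intent and is not a gap in your argument.
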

    
    Note that $\|\Theta^\top W \| \leq M$ is a mild assumption, since in general, this is a result of enforcing Lipschitz continuity (e.g., the Lipschitz neural networks studied by \citet{bethune2023} require weights with bounded norms). We will set $\delta = 1/\sqrt{n}$ to reflect the fact that training on more samples reduces the generalization error.

    The MI term in \eqref{eq:ratedisbound} or \eqref{eq:ratedisbound_2} is evaluated between the training data $S_n$ and a low-dimensional, potentially quantized projection of $W$. This makes our rate-distortion bounds simpler to estimate than standard information-theoretic ones that rely on $\MI{S_n}{W}$. Our bound in \eqref{eq:quant_ratedis_0}-\eqref{eq:quant_ratedis} further reduces the computational complexity by bounding the MI term in \eqref{eq:ratedisbound_2} using similar arguments to those in \citep[§4.1]{xu2017}. It should be viewed as an interpretable and easily computable alternative bound that supports our main message: almost-compressibility on random subspaces implies better generalization. Indeed, given that the term in \eqref{eq:quant_ratedis} increases with increasing $d$, a tighter generalization bound can be achieved for $d < D$, provided that the corresponding distortion in \eqref{eq:quant_ratedis_0} (which measures the rate of compressibility) is sufficiently small. Practitioners also do not need to quantize the weights to evaluate \eqref{eq:quant_ratedis_0}-\eqref{eq:quant_ratedis}, making the process computationally more efficient: assuming the existence of a quantizer $\mathcal{Q}$ as described in \Cref{thm:quant_ratedis} is sufficient.

    Our theoretical findings provide concrete guidelines on how to tighten the generalization error bounds in practice. First, the value of the Lipschitz constant $L$ can be directly controlled through the design of the neural network, as we explain in \Cref{sec:exp} and \Cref{app:rate_dis_fc}. The term $\E_{P_{W|\Theta} \otimes P_\Theta} \big[ \rho(W, \Theta \Theta^\top W) \big]$ can be regularized by simply adding a penalty term $\lambda \E_{P_{W|\Theta} \otimes P_\Theta} \big[ \rho(W, \Theta \Theta^\top W) \big]$ to the training objective. Depending on the value of the hyperparameter $\lambda$, this regularization can encourage solutions to be close to the subspace $\mathrm{W}_{\Theta, d}$, \ie, having low distortion from the compressed weights. The choice of $d$ is also important and can be tuned to balance the MI term with the distortion required (how small $\lambda$ needs to be) to achieve low training error. Indeed, choosing a higher $\lambda$ increases the importance of the regularization term, effectively reducing the importance of the empirical risk. Hence, the empirical risk may rise, which in most cases will increase the training error.

\section{Empirical Analysis} \label{sec:exp}

\begin{figure}[t]
    \centering    
    \includegraphics[width=.42\textwidth]{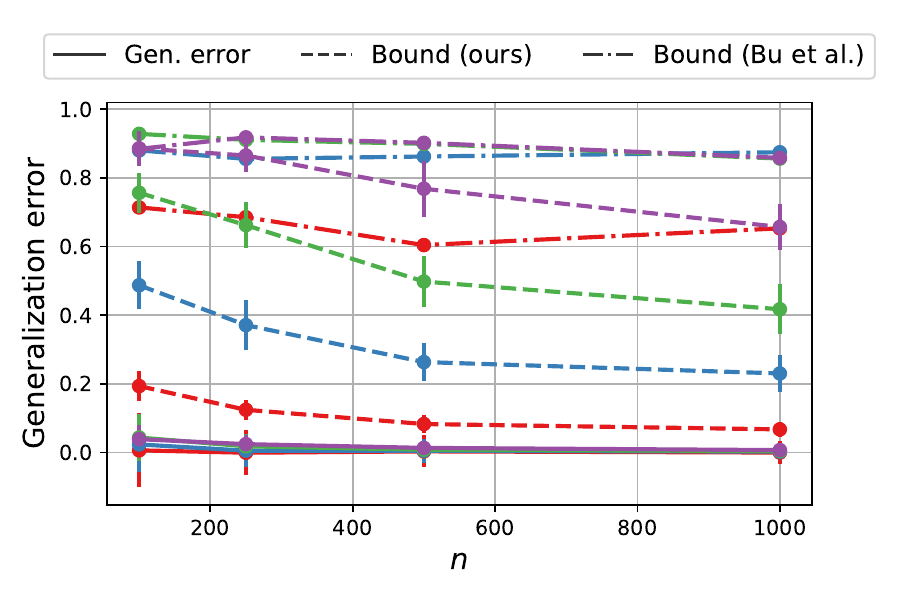}
    \includegraphics[width=.42\textwidth]{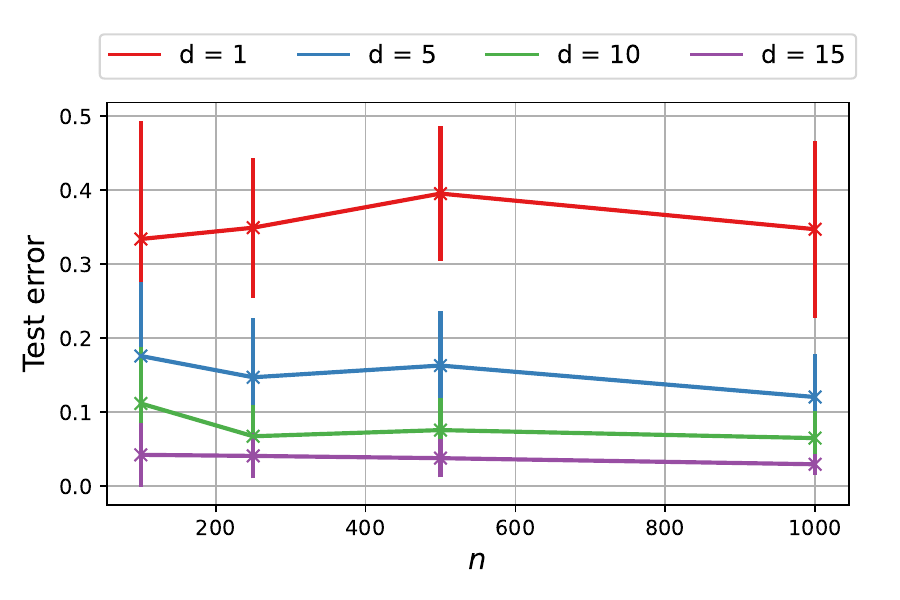}
    \vspace{-3mm}
    \caption{Illustration of our bound \eqref{eq:genbound_boundloss} and \cite{bu2019} on binary classification of Gaussian data of dimension $20$ with logistic regression trained on $\mathrm{W}_{\Theta, d}$}
    \label{fig:logreg}
    \vspace{-4mm}
\end{figure}

To illustrate our findings and their practical impact, we train several neural networks for classification, and evaluate their generalization error and our bounds. This requires compressing NNs (via random slicing and quantization) and estimating MI. We explain our methodology below, and refer to \Cref{app:methodology} for more details and results. We provide the code to reproduce the experiments\footnote{{\small Code is available here: \url{https://github.com/kimiandj/slicing_mi_generalization}}}.

\textbf{Random projections.} To sample $\Theta \in \R^{D \times d}$ such that $\Theta^\top \Theta = \mI_d$, we construct an orthonormal basis using the singular value decomposition of a random matrix $\Gamma \in \R^{D \times d}$ whose entries are i.i.d.\ from $\mathcal{N}(0, 1)$. Since the produced matrix $\Theta$ is dense, the projection $\Theta^\top w$ induces a runtime of $\mathcal{O}(dD)$. To improve scalability, we use the sparse projector by \citet{li2018measuring} and the Kronecker product projector by \citet{lotfi2022pac}, which compute $\Theta^\top w$ in $\mathcal{O}(d\sqrt{D})$ and $\mathcal{O}(\sqrt{dD})$ operations respectively, and require storing only $\mathcal{O}(d\sqrt{D})$ and $\mathcal{O}(\sqrt{dD})$ matrix elements respectively.

\textbf{Quantization.} We use the quantizer by \citet{lotfi2022pac}, which simultaneously learns quantized weights $W'$ and quantized levels $(c_1, \cdots, c_L)$. This allows us to highly compress NNs and bypass the estimation of MI: for any $\Theta \sim P_\Theta$, $\DMI{W'}{S_n}{\Theta} \leq H^\Theta(W') \leq \lceil d \times H(p) \rceil + L \times (16 + \lceil \log_2 d \rceil) + 2$, where $H^\Theta(W')$ denotes the entropy of $W'$ given $\Theta$, and $H(p) \triangleq -\sum_{l=1}^L p_l \log(p_l)$ is the entropy of the quantized level ($p_l$ is the empirical probability of $c_l$). %

\textbf{Estimating MI.} In our practical settings, the MI terms arising in the generalization bounds cannot be computed exactly, so we resort to two popular estimators: the $k$-nearest neighbor estimator ($k$NN-MI, \citealp{kraskov2004}) and MINE \citep{belghazi2018mine}. We obtain NaN values with $k$NN-MI for $d > 2$ thus only report the bounds estimated with MINE. In our experiments, the use of MINE was not a practical issue because $d$ had low to relatively high values. %

\begin{figure}
  \centering
  \begin{subfigure}[b]{.5\textwidth}
    \centering
    \includegraphics[width=.4\textwidth]{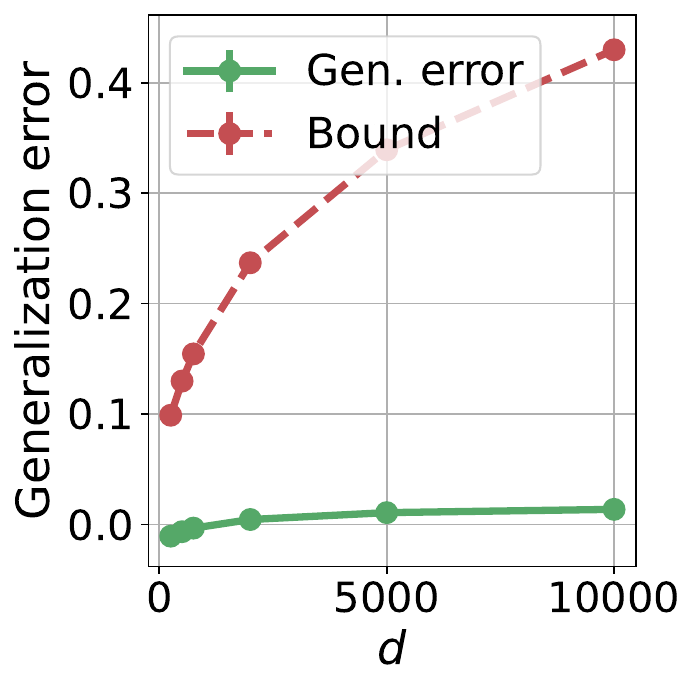}
    \includegraphics[width=.4\textwidth]{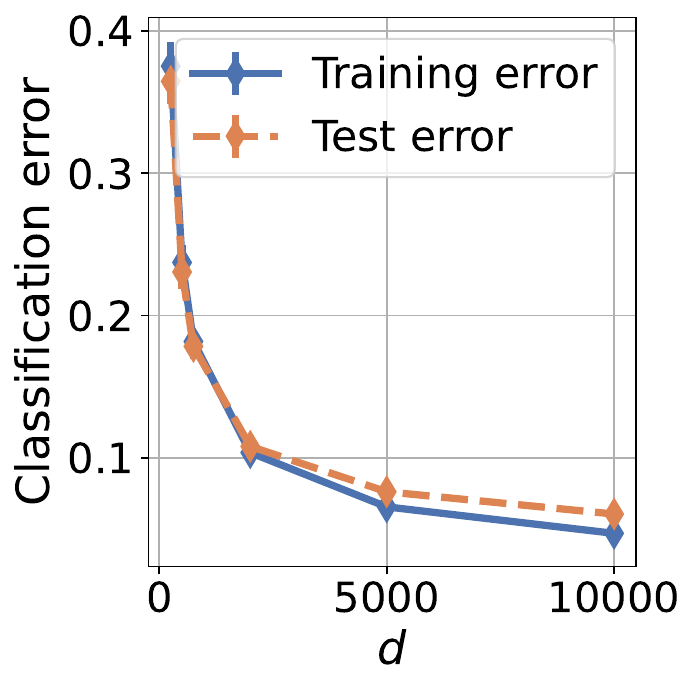}
    \vspace{-2mm}
    \caption{MNIST}
    \vspace{1mm}
  \label{fig:quant_mnist}
  \end{subfigure}
  \begin{subfigure}[b]{.5\textwidth}
    \centering
    \includegraphics[width=.4\textwidth]{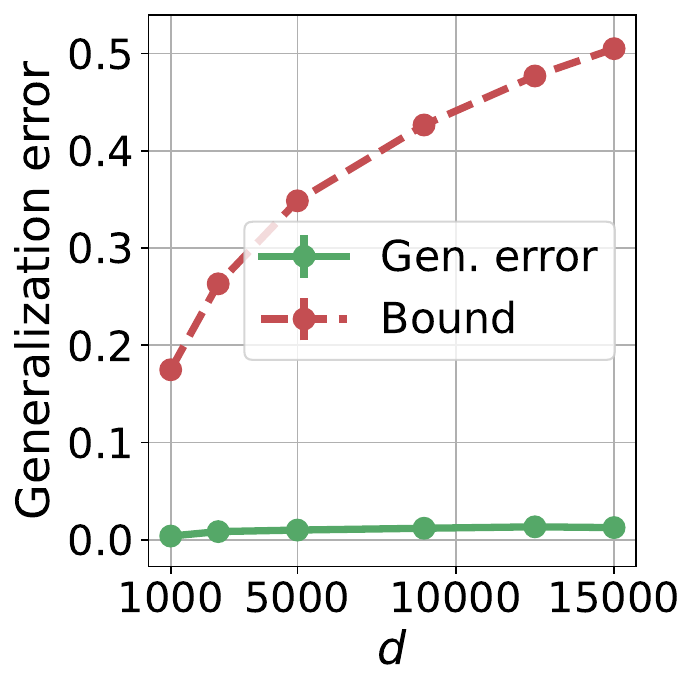}
    \includegraphics[width=.4\textwidth]{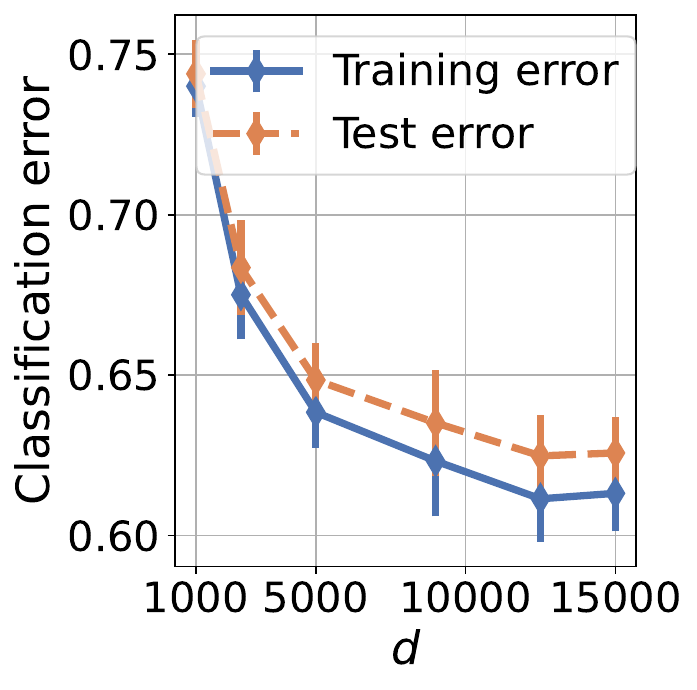}
    \vspace{-1mm}
    \caption{CIFAR-10}
  \label{fig:quant_cifar10}
  \end{subfigure}
  \vspace{-4mm}
  \caption{Generalization bounds with NNs for image classification. The weights are projected and quantized. %
  }
  \vspace{-6mm}
  \label{fig:bounds_nns}
\end{figure}
\subsection{Generalization bounds for models trained on $\mathrm{W}_{\Theta, d}$} \label{subsec:illustration_compress_bounds}

\textbf{Binary classification with logistic regression.} %
We consider the same setting as \citet[{\S}VI]{bu2019}: each data point $Z = (X, Y)$ consist of features $X \in \R^s$ and labels $Y \in \{0, 1\}$, $Y$ is uniformly distributed in $\{0, 1\}$, and $X | Y \sim \mathcal{N}(\mu_Y, 4 \mI_s)$ with $\mu_0 = (-1, \dots, -1)$ and $\mu_1 = (1, \dots, 1)$. We use a linear classifier and evaluate the generalization error based on the loss function $\ell(w, z) = \1_{\hat{y} \neq y}$, where $\hat{y}$ is the prediction of input $x$ defined as $\hat{y} \triangleq \1_{\bar{w}^T x + w_0 \geq 0}$, $\forall w = (\bar{w}, w_0) \in \R^{s+1}$. We train a logistic regression on $\mathrm{W}_{\Theta, d}$ and estimate the generalization error. Since $\ell$ is bounded by $C = 1$, we approximate the generalization error bound from \Cref{thm:bounded_loss} for $d < D$, and \citet[Prop. 1]{bu2019} for $d = D$. %
\Cref{fig:logreg} reports the results for $s = 20$ and different values of $n$ and $d$: we observe that our bound holds and accurately reflects the behavior of the generalization error against $(n, d)$. Our methodology also provides tighter bounds than \citet{bu2019}, and the difference increases with decreasing $d$.
On the other hand, the lower $d$, the lower generalization error and its bound, but the higher the test risk (\Cref{fig:logreg}). This is consistent with prior empirical studies \cite{li2018measuring} and explained by the fact that lower values of $d$ induce a more restrictive hypothesis space, thus make the model less expressive.

\textbf{Multiclass classification with NNs.} Next, we evaluate our generalization error bounds for neural networks trained on image classification. Denote by $f(w,x) \in \R^K$ the output of the NN parameterized by $w$ given an input image $x$, with $K > 1$ the number of classes. The loss is $\ell(w, z) = \1_{\hat{y} \neq y}$, with $\hat{y} = \argmax_{i \in \{1, \dots, K\}} [f(w,x)]_i$. We train fully-connected NNs to classify MNIST and CIFAR-10 datasets, with $D = 199\,210$ and $D = 656\,810$ respectively: implementation details are given in \Cref{appendix:NNs}. Even though we significantly decrease the dimension of the parameters by slicing, $d$ can still be quite high thus obtaining an accurate estimation of $\DMI{W'}{S_n}{\Theta}$ remains costly. To mitigate this issue, in addition to slicing, we discretize $W'$ with the quantizer by \citet{lotfi2022pac} and evaluate \Cref{thm:adapted_xu} with $\DMI{W'}{S_n}{\Theta}$ replaced by $\lceil d \times H(p) \rceil + L \times (16 + \lceil \log_2 d \rceil) + 2$, as discussed at the beginning of \Cref{sec:exp}. Our results in \Cref{fig:bounds_nns} illustrate that employing slicing followed by quantization enables the computation of non-vacuous generalization bounds for NNs, while still maintaining test performance for adequate values of $d$ (which is consistent with \citet{li2018measuring}). We point out that in these high-dimensional problems, slicing is unequivocally the key to making information-theoretic bounds possible to estimate: quantization alone is far from sufficient in this regime. For example, even binary quantization of $D$ weights would yield $2^D$ states, requiring an unimaginably large number of samples to accurately estimate the mutual information term. Additional results on MNIST and Iris datasets are given in \Cref{appendix:NNs}.

\subsection{Rate-distortion bounds} \label{subsec:ratedis_exp}

\begin{figure}
  \centering
  \begin{subfigure}[b]{0.25\textwidth}
    \centering
    \includegraphics[width=\textwidth]{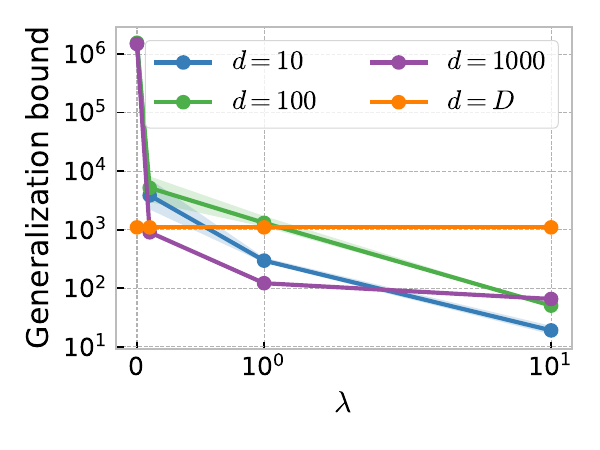}
    \vspace{-7mm}
    \caption{Rate-distortion bound}
    \label{fig:ratedis_bound}
  \end{subfigure}%
  \begin{subfigure}[b]{0.25\textwidth}
    \centering
    \includegraphics[width=\textwidth]{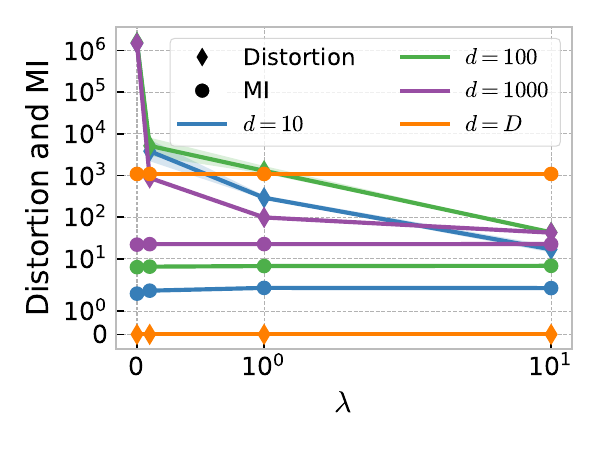}
    \vspace{-7mm}
    \caption{Distortion and MI terms}
    \label{fig:dist_mi}
  \end{subfigure}
  \begin{subfigure}[b]{0.25\textwidth}
    \centering
    \includegraphics[width=\textwidth]{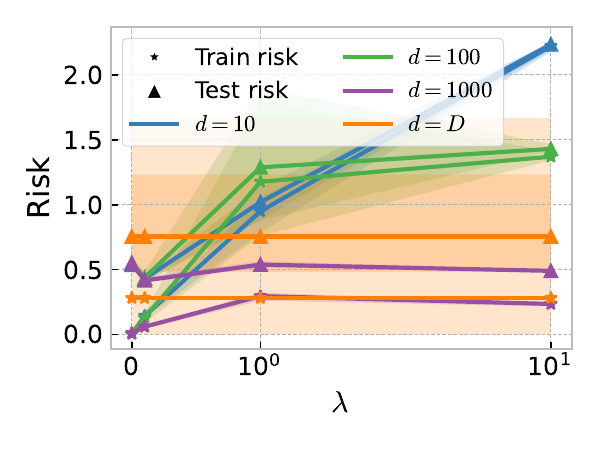}
    \vspace{-7mm}
    \caption{Train and test risks}
    \label{fig:ratedis_generr}
  \end{subfigure}%
  \begin{subfigure}[b]{0.25\textwidth}
    \centering
    \includegraphics[width=\textwidth]{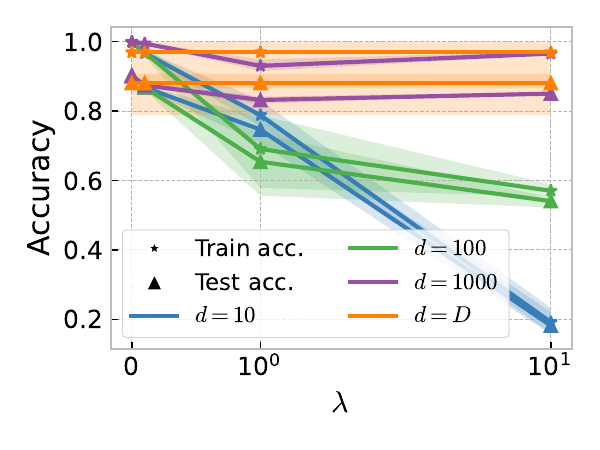}
    \vspace{-7mm}
    \caption{Train and test accuracies}
    \label{fig:ratedis_accs}
  \end{subfigure}
  \caption{Generalization errors and rate-distortion bounds for feedforward NNs trained on MNIST. Results are averaged over 5 runs. Shaded areas represent the 2.5\% and 97.5\% percentiles. For each run, expectations are computed with Monte Carlo estimates over 5 samples of $\Theta$.}
  \vspace{-3mm}
\end{figure}

We evaluate our rate-distortion generalization bounds for neural networks trained on image classification. For $q \in \mathbb{N}^*$, let $f : \mathrm{W} \times \mathrm{X} \to \R^{K}$ be a $q$-layer feedforward network with ReLU activations. We train $f$ using a slightly modified cross-entropy loss defined for $w \in \mathrm{W}$ and $z = (x, y) \in \mathrm{X} \times \{1, \dots, K\}$ as $\ell(w, z) = - \log(\hat{p}(w,x)_y)$, where $\hat{p}(w,x) = \max(p(w,x), p_{\min})$, $p(w,x) = e^{f(w,x)} / \1^\top e^{f(w,x)} \in (0, 1]^K$ and $p_{\min} > 0$. This loss is bounded from above by $-\log(p_{\min})$ and for any $z$, $\ell(\cdot, z)$ is Lipschitz-continuous with constant $\sqrt{2}$. Assuming that the weight matrix of each layer has bounded spectral norm ($\forall i \in \{1, \dots, q\}$, $\| W^{(i)} \|_2 \leq M$), we show that \Cref{thm:rateDis} or \ref{thm:quant_ratedis} applies with the distortion $\rho(W,\Theta \Theta^\top W) = \sum_{i=1}^q \| W^{(i)} - (\Theta \Theta^\top W)^{(i)} \|_2 $ and the constants $C$ and $L$ specified in \Cref{thm:rate_dis_bound_fc}. 

We train a $3$-layer feedforward NN $f$ to classify MNIST. The first two layers each contain $1000$ neurons and the final layer has $K = 10$ neurons, thus the total number of parameters is $D = 1000 \cdot (784 + 1000 + 10) = 1\,794\,000$. Our goal is to evaluate the generalization error and its rate-distortion bound in \Cref{thm:quant_ratedis} for different values of subspace dimension $d$ and regularization coefficient $\lambda$. To this end, we parameterize $f$ with $w = \Theta (\Theta^\top w_1) + \bar{\Theta} (\bar{\Theta}^\top w_2) \in \R^D$, where $\Theta \in \sti(d,D)$ is randomly generated at initialization and $\bar{\Theta} \in \R^{D \times (D-d)}$ is such that $[\Theta, \bar{\Theta}] \in \R^{D \times D}$ forms an orthogonal basis of $\R^D$. At each run, we train $f$ on a random subset of MNIST with $n = 1000$ samples for 5 different samples of $\Theta$. We set $p_{\text{min}} = 1\text{e-}4$. To estimate the generalization error, we approximate the population risk on a test dataset of $10\,000$ samples. Our results clearly depict the interplay between $\lambda$ and $d$ and their impact on the generalization error, bound and risk. Specifically, a higher $\lambda$ makes our model more compressible by encouraging its parameters to lie on the $d$-dimensional subspace characterized by $\Theta$ (see our discussion in \Cref{subsec:ratedis}). This has two main consequences, as predicted by our theory and illustrated in our plots. First, a higher $\lambda$ leads to a lower generalization error (\Cref{fig:ratedis_generr}) and a tighter rate-distortion bound, the distortion term being smaller (\Cref{fig:ratedis_bound,fig:dist_mi}). Second, as $\lambda$ increases and $d$ decreases, the train/test risk become higher (\Cref{fig:ratedis_generr}). This is consistent with \citet{li2018measuring}, as this regime effectively reduces to training on a low-dimensional random subspace. To further demonstrate the trade-off between high compressibility/low generalization error and high train/test error, we also plot the distortion (\ie, $2L(\mathbb{E}[\rho(W,\Theta \Theta^T W)] + 1/\sqrt{n})$) and MI term ($C\sqrt{d\log(2M\sqrt{dn})/(2n)}$) against $\lambda$, for different values of $d$ (\Cref{fig:dist_mi}). Additionally, we plot the test and train accuracies vs. $\lambda$ and $d$ (\Cref{fig:ratedis_accs}). We observe there exist combinations of $\lambda$ and $d$ that yield tighter generalization error bound while inducing satisfactory training and test errors, e.g., $(\lambda, d) = (10, 1000)$. This suggests that with carefully chosen $\lambda$ and $d$, our methodology can tighten generalization bounds while preserving model performance. This is a significant step towards practical relevance of information-theoretic bounds: to our knowledge, such bounds applied to neural networks have been fundamentally intractable and pessimistic, thus lacking practical use beyond very small toy examples (we refer to \Cref{sec:relatedwork} for an expanded discussion). In contrast, by taking into account almost-compressibility via random slicing and quantization, we can derive bounds that are much easier to compute, and develop a theoretically-grounded regularization scheme to effectively control the generalization error in practice.

\section{Conclusion}

In this work, we combined recent empirical compression methods for learning models, such as NNs, with generalization bounds based on input-output MI. Our results indicate that random slicing is a very interesting scheme, as it is easy to implement, performs well, and is highly suitable for practically computable and tighter information-theoretic bounds. We also explore a notion of approximate compressibility, \ie, rate-distortion, where the learned parameters are close to a quantization of the compressed subspace but do not lie on it exactly. This framework provides more flexibility, enabling the model to maintain good training error even with a smaller subspace dimension $d$, while ensuring that the resulting generalization bounds are as tight as possible and permitting the use of analytical bounds on the MI instead of difficult-to-compute MI estimates. %
Our rate-distortion approach also motivated a weight regularization technique to make trained NNs as approximately compressible as possible and ensure that our bound is small in practice. %
Future work includes a more detailed exploration of strategies for using our bounds to help inform selection and design of NN architectures, and exploring bounds and regularizers based on other successful compression methods. %

\section*{Acknowledgements}

Kimia Nadjahi acknowledges the generous support from the MIT Postdoctoral Fellowship Program for Engineering Excellence. The MIT Geometric Data Processing Group acknowledges the generous support of Army Research Office grants W911NF2010168 and W911NF2110293, from the CSAIL Systems that Learn program, from the MIT–IBM Watson AI Laboratory, from the Toyota–CSAIL Joint Research Center, and from an Amazon Research Award. Kimia Nadjahi expresses gratitude to the organizers of the ICML 2023 workshop ``Neural Compression: From Information Theory to Applications'', where an early version of this work was accepted for an oral presentation and sparked fruitful discussions with the attendees.

\section*{Impact Statement}
This paper presents work whose goal is to advance the field of Machine Learning. There are many potential societal consequences of our work, none which we feel must be specifically highlighted here.

\bibliography{main.bib}
\bibliographystyle{icml2024}

\flushcolsend
\newpage
\appendix
\onecolumn

\section{Postponed Proofs for \Cref{sec:bounds}}

{\textbf{Notation.} $\drisk{w'}{\Theta} = \E_{Z \sim \mu}[\ell^\Theta(w', Z)]$ and $\driskn{w'}{\Theta} \triangleq \frac1n \sum_{i=1}^n \ell^\Theta(w', z_i)$, $\forall w = \Theta w' \in \setW_{\Theta,d}$ and $\ell^\Theta(w', z) \triangleq \ell(\Theta w', z)$. The generalization error of $\mathcal{A}^{(d)}$ is $\generror{\mu}{\mathcal{A}^{(d)}} = \E[\drisk{W'}{\Theta} - \driskn{W'}{\Theta}]$ with the expectation taken over $P_{W' | \Theta, S_n} \otimes P_\Theta \otimes \mu^{\otimes n}$. }

\subsection{Proof of \Cref{thm:genbound}} \label{app:genresult}

Consider three random variables $X \in \mathrm{X}$, $Y \in \mathrm{Y}$ and $U \in \mathrm{U}$. Denote by $P_{X,Y,U}$ their joint distribution and by $P_X, P_Y, P_U$ the marginals. Let $\tilde{X}$ (respectively, $\tilde{Y}$) be an independent copy of $X$ (resp., $Y$) with joint distribution $P_{\tilde{X}, \tilde{Y}} = P_{\tilde{X}} \otimes P_{\tilde{Y}}$. Given $U$, let $f^U : \mathrm{X} \times \mathrm{Y} \to \R$ be a mapping parameterized by $U$, and denote by $K_{f^U(\tilde{X}, \tilde{Y})}$ the cumulant generating function of $f^U(\tilde{X}, \tilde{Y})$, \ie~for $t \in \R$,
\begin{equation}
    K_{f^U(\tilde{X}, \tilde{Y})}(t) = \log \E\big[e^{ t(f^U(\tilde{X}, \tilde{Y}) - \E[f^U(\tilde{X}, \tilde{Y})])} \big] 
\end{equation}
where the expectations are taken w.r.t. $P_{X|U} \otimes P_{Y|U}$.

\begin{lemma} \label{lem:adapted_bu}
    Suppose that for any $U \sim P_U$, there exists $b_+ \in \mathbb{R}_+^* \cup \{ +\infty\}$ and a convex function $\varphi_+(\cdot, U) : [0, b_+) \to \R$ such that $\varphi_+(0, U) = \varphi'_+(0, U) = 0$ and for $t \in [0, b_+)$, $K_{f^U(\tilde{X}, \tilde{Y})}(t) \leq \psi_+(t, U)$. Then, 
    \begin{equation}
        \E_{P_{X, Y, U}}[f^U(X, Y)] - \E_{P_{\tilde{X}, \tilde{Y}, U}}[f^U(\tilde{X}, \tilde{Y})] \leq \E_{P_U} \left[ \inf_{t \in [0, b_+)} \frac{\DMI{X}{Y}{U} + \psi_+(t, U)}{t} \right] \,. \label{eq:phi_plus_0}
    \end{equation}
    
    Suppose that for any $U \sim P_U$, there exists $b_- \in \R_+^* \cup \{ +\infty\}$ and a convex function $\varphi_-(\cdot, U) : [0, b_-) \to \R$ such that $\varphi_-(0, U) = \varphi'_-(0, U) = 0$ and for $t \in (b_-, 0]$, $K_{f^U(\tilde{X}, \tilde{Y})}(t) \leq \psi_-(-t, U)$. Then, 
    \begin{equation} \label{eq:phi_minus}
        \E_{P_{\tilde{X}, \tilde{Y}, U}}[f^U(\tilde{X}, \tilde{Y})] - \E_{P_{X, Y, U}}[f^U(X, Y)] \leq \E_{P_U} \left[ \inf_{t \in [0, -b_-)} \frac{\DMI{X}{Y}{U} + \psi_-(t, U)}{t} \right] \,.
    \end{equation}
\end{lemma}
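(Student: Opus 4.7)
The plan is to adapt the Donsker--Varadhan plus Chernoff argument of \citet{bu2019} (their Proposition 3) to the disintegrated setting, by first carrying out the argument for a fixed realization of $U$ and then integrating over $P_U$. This is the natural disintegrated analogue of the inequality that powers \Cref{thm:bu}, and it will serve as the workhorse both for \Cref{thm:adapted_xu} and for the individual-sample bound in \Cref{thm:bounded_loss}.

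Concretely, I would proceed as follows. Fix $U \sim P_U$ and regard $P_{X,Y|U}$ and $P_{X|U} \otimes P_{Y|U}$ as two probability measures on $\mathrm{X} \times \mathrm{Y}$. The Donsker--Varadhan variational representation of the KL divergence gives, for every bounded measurable $g:\mathrm{X}\times\mathrm{Y}\to\R$,
\begin{equation*}
    \E_{P_{X,Y|U}}[g(X,Y)] \le \KL{P_{X,Y|U}}{P_{X|U}\otimes P_{Y|U}} + \log \E_{P_{X|U}\otimes P_{Y|U}}\bigl[e^{g(\tilde X,\tilde Y)}\bigr].
\end{equation*}
Applying this with $g = t\, f^U$ for $t \in (0, b_+)$, recognizing the right-hand $\log$ as $t\,\E_{P_{X|U}\otimes P_{Y|U}}[f^U(\tilde X,\tilde Y)] + K_{f^U(\tilde X,\tilde Y)}(t)$, and invoking the hypothesis $K_{f^U(\tilde X,\tilde Y)}(t) \le \psi_+(t,U)$, one obtains after dividing by $t>0$
\begin{equation*}
    \E_{P_{X,Y|U}}[f^U(X,Y)] - \E_{P_{X|U}\otimes P_{Y|U}}[f^U(\tilde X,\tilde Y)] \le \frac{\DMI{X}{Y}{U} + \psi_+(t,U)}{t}.
\end{equation*}
Taking the infimum over $t\in(0,b_+)$ (extended by continuity to $t=0$ using $\psi_+(0,U)=\psi'_+(0,U)=0$) and then the outer expectation over $U\sim P_U$, the left-hand side becomes $\E_{P_{X,Y,U}}[f^U(X,Y)] - \E_{P_{\tilde X,\tilde Y,U}}[f^U(\tilde X,\tilde Y)]$ by the tower property, yielding \eqref{eq:phi_plus_0}. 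The second inequality \eqref{eq:phi_minus} follows by the symmetric argument applied to $-f^U$ (equivalently, to negative values of $t$), using the hypothesis $K_{f^U(\tilde X,\tilde Y)}(t)\le\psi_-(-t,U)$ for $t\in(-b_-,0]$ and then substituting $s=-t$.

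The only genuinely delicate step is measurability: we need the map $U \mapsto \inf_{t\in[0,b_+)}\bigl(\DMI{X}{Y}{U}+\psi_+(t,U)\bigr)/t$ to be measurable so that the outer expectation is well defined. This is handled by the standard observation that, since $t\mapsto (\DMI{X}{Y}{U}+\psi_+(t,U))/t$ is continuous on $(0,b_+)$ for each $U$, the infimum equals the infimum over a countable dense subset, hence is measurable. Beyond this, the argument is a clean two-line Chernoff bound applied conditionally, so I expect the main effort in the paper's appendix to be bookkeeping around the sign conventions of $\psi_\pm$ and the intervals $[0,b_\pm)$, rather than any substantive new inequality.
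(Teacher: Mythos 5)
Your proposal is correct and follows essentially the same route as the paper: Donsker--Varadhan applied conditionally on $U$ with the test function $t f^U$, the cumulant-generating-function hypothesis to bound the log-moment term, division by $t$ and an infimum, and finally the outer expectation over $P_U$ (with the negative-$t$ case handled symmetrically). Your added remarks on extending the infimum to $t=0$ and on measurability of the infimum are sensible refinements that the paper leaves implicit.
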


\begin{proof}
    Let $U \sim P_U$. By Donsker-Varadhan variational representation,
    \begin{align}
        \DMI{X}{Y}{U} &= \KL{P_{(X, Y) | U}}{P_{X|U} \otimes P_{Y|U}} \\
        &= \sup_{g \in \mathcal{G}^U}\; \E_{P_{(X, Y) | U}}[g^U(X, Y)] - \log \E_{P_{X | U} \otimes P_{Y | U}}[e^{g^U(\tilde{X}, \tilde{Y})}]
    \end{align}
    where $\mathcal{G}^U \triangleq \{ g^U : \mathrm{X} \times \mathrm{Y} \to \R\;\text{s.t.}\; \E_{P_{X | U} \otimes P_{Y | U}}[e^{g^U(\tilde{X}, \tilde{Y})}] < \infty \}$. Therefore, for any $t \in [0, b_+)$,
    \begin{align}
        \mathbf{KL}(P_{(X, Y) | U} \| P_{X|U} \otimes P_{Y|U}) &\geq t \E[f^U(X, Y)] - \log \E[e^{t f^U(\tilde{X}, \tilde{Y})}] \\
        &\geq t \left( \E[f^U(X, Y)] - \E[f^U(\tilde{X}, \tilde{Y})] \right) - \psi_+(t, U) \label{eq:phi_plus}
    \end{align}
    where \eqref{eq:phi_plus} follows from assuming that for $t \in [0, b_+), K_{f^U(\tilde{X}, \tilde{Y})}(t) \leq \psi_+(t, U)$. Hence,
    \begin{align}
        \E[f^U(X, Y)] - \E[f^U(\tilde{X}, \tilde{Y})] &\leq \inf_{t \in [0, b_+)} \frac{\DMI{X}{Y}{U} + \psi_+(t, U)}{t} \,. \label{eq:phi_plus_2}
    \end{align}
    We obtain the final result \eqref{eq:phi_plus_0} by taking the expectation of \eqref{eq:phi_plus_2} over $P_U$.

    We can prove analogously that \eqref{eq:phi_minus} holds, assuming for $t \in [0, b_-), K_{f^U(\tilde{X}, \tilde{Y})}(t) \leq \psi_-(-t, U)$. 
    
\end{proof}

\begin{theorem} \label{thm:genbound}
    Assume that for $\Theta \sim P_\Theta$, there exists $C_- \in \R_+^* \cup \{ +\infty\}$ s.t. for $t \in (C_-, 0]$, $K_{\ell^\Theta(\tilde{W}', \tilde{Z})}(t) \leq \psi_-(-t, \Theta)$, where $\psi_-(\cdot, \Theta)$ is convex and $\psi_-(0, \Theta) = \psi'_-(0, \Theta) = 0$. Then, 
    \begin{align} 
        \generror{\mu}{\mathcal{A}^{(d)}} &\leq \frac1n \sum_{i=1}^n \E_{P_\Theta} \left[ \inf_{t \in [0, -C_-)} \frac{\DMI{W'}{Z_i}{\Theta} + \psi_-(t, \Theta)}{t} \right] \,. \label{eq:bound_gen_psi_minus}
    \end{align}
    Assume that for $\Theta \sim P_\Theta$, there exists $C_+ \in \R_+^* \cup \{ +\infty\}$ s.t. for $t \in [0, C_+)$, $K_{\ell^\Theta(\tilde{W}', \tilde{Z})}(t) \leq \psi_+(t, \Theta)$, where $\psi_+(\cdot, \Theta)$ is convex and $\psi_+(0, \Theta) = \psi'_+(0, \Theta) = 0$. Then, 
    \begin{align} 
        \generror{\mu}{\mathcal{A}^{(d)}} &\geq \frac1n \sum_{i=1}^n \E_{P_\Theta} \left[ \inf_{t \in [0, C_+)} \frac{\DMI{W'}{Z_i}{\Theta} + \psi_+(t, \Theta)}{t} \right] \,. \label{eq:bound_gen_psi_plus}
    \end{align}
\end{theorem}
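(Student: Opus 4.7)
My strategy is to reduce Theorem \ref{thm:genbound} to Lemma \ref{lem:adapted_bu} applied separately to each training sample $Z_i$, under the identifications $X = W'$, $Y = Z_i$, $U = \Theta$, and $f^U = \ell^\Theta$. The key point is that $Z_i \sim \mu$ is independent of $\Theta$, so $P_{Y|U} = \mu$, while $P_{X|U} = P_{W'|\Theta}$; consequently the independent-copy product measure $P_{X|U} \otimes P_{Y|U}$ in the lemma coincides with $P_{W'|\Theta} \otimes \mu$, which is precisely the reference distribution appearing in the CGF assumption of the theorem.

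First, I would write the generalization error as an average of per-sample, $\Theta$-conditioned differences. Since $Z_1, \dots, Z_n$ are i.i.d.\ under $\mu$ and independent of $\Theta$,
\begin{equation*}
\generror{\mu}{\mathcal{A}^{(d)}} = \frac{1}{n}\sum_{i=1}^n \E_{P_\Theta}\Bigl[\,\E_{P_{W'|\Theta}\otimes\mu}[\ell^\Theta(\tilde W', \tilde Z)] - \E_{P_{W', Z_i|\Theta}}[\ell^\Theta(W', Z_i)]\,\Bigr].
\end{equation*}
Next, I would invoke Lemma \ref{lem:adapted_bu} pointwise in $\Theta$ and for each $i$. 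Under the $\psi_-$ hypothesis, the lemma upper-bounds $\E_{\tilde X, \tilde Y, U}[f^U(\tilde X, \tilde Y)] - \E_{X, Y, U}[f^U(X, Y)]$, which is precisely the $i$-th summand above; summing over $i$ and dividing by $n$ produces \eqref{eq:bound_gen_psi_minus}. Under the $\psi_+$ hypothesis, the lemma instead upper-bounds the reverse difference, which is the negative of the $i$-th summand, and rearranging yields the companion bound \eqref{eq:bound_gen_psi_plus}.

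The only delicate step is the bookkeeping needed to justify the disintegration: one must verify that, conditionally on $\Theta$, the pair $(W', Z_i)$ has joint law $P_{W', Z_i|\Theta}$ obtained by marginalising $P_{W'|\Theta, S_n}\otimes \mu^{\otimes n}$ over $\{Z_j\}_{j \neq i}$, with marginals $P_{W'|\Theta}$ and $\mu$ respectively. Once this identification is in place, no further CGF or Donsker--Varadhan manipulation is needed beyond what the lemma already supplies; the theorem follows as a direct corollary after averaging over $P_\Theta$.
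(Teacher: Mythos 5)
Your proposal is correct and follows essentially the same route as the paper's own proof: the identical per-sample decomposition of $\generror{\mu}{\mathcal{A}^{(d)}}$ into differences between the independent-copy expectation under $P_{W'|\Theta}\otimes\mu$ and the joint expectation under $P_{W',Z_i|\Theta}$, followed by applying \Cref{lem:adapted_bu} with $X = W'$, $Y = Z_i$, $U = \Theta$, and $f^U = \ell^\Theta$ (your extra remark on the marginalization bookkeeping is a point the paper leaves implicit). Note only that, as you correctly observe, the $\psi_+$ branch of the lemma bounds the \emph{negative} of each summand, so the rearrangement yields $\generror{\mu}{\mathcal{A}^{(d)}} \geq -\frac1n\sum_i \E_{P_\Theta}[\inf_t(\cdots)]$; the missing minus sign in \eqref{eq:bound_gen_psi_plus} appears to be a typo in the theorem statement rather than a flaw in your argument.
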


\begin{proof}[Proof of \Cref{thm:genbound}]
    The generalization error of $\mathcal{A}^{(d)}$ can be written as
    \begin{align}
        \generror{\mu}{\mathcal{A}^{(d)}} &= \frac1n \sum_{i=1}^n \left\{ \E_{P_{W' |\Theta} \otimes P_{\Theta} \otimes \mu} [\ell^\Theta(\tilde{W'}, \tilde{Z_i})] - \E_{P_{W' | \Theta, Z_i} \otimes P_{\Theta} \otimes \mu} [\ell^\Theta(W', Z_i)] \right\} \,. \label{eq:proof_gend}
    \end{align}
    Our final bounds \eqref{eq:bound_gen_psi_plus} and \eqref{eq:bound_gen_psi_minus} result from applying \Cref{lem:adapted_bu} on each term of the sum in \eqref{eq:proof_gend}, \ie~with $X = W'$, $Y = Z_i$ and $f^U(X, Y) = \ell^\Theta(W', Z_i)$.
    
\end{proof}

\subsection{Applications of \Cref{thm:genbound}}

We specify \Cref{thm:genbound} under different sub-Gaussian conditions on the loss. A random variable $X$ is said to be $\sigma$-sub-Gaussian (with $\sigma > 0$) if for any $t \in \R$,
\begin{equation}
    \E[e^{t(X - \E[X])}] \leq e^{\sigma^2 t^2/2} \,.
\end{equation}

\begin{proof}[Proof of \Cref{thm:adapted_xu}]
    Define $h^\Theta(w', s) = (1/n) \sum_{i=1}^n \ell^\Theta(w', z_i)$ for $w' \in \R^d$, $s = (z_1, \dots, z_n) \in \mathrm{Z}^n$ and $\Theta \in \R^{D \times d}$ s.t. $\Theta^\top \Theta = \mI_d$. The generalization error of $\mathcal{A}^{(d)}$ can be written as,
    \begin{align}
        \generror{\mu}{\mathcal{A}^{(d)}} &= \E_{P_{W'|\Theta} \otimes P_{\Theta} \otimes \mu^{\otimes n}} \left[h^\Theta(\tilde{W'}, \tilde{S}_n)\right] - \E_{P_{W' | Z_i, \Theta} \otimes P_{\Theta} \otimes \mu^{\otimes n}} \left[h^\Theta(W', S_n)\right] \,.
    \end{align}
    Since we assume that $\ell^\Theta(w', Z)$ is $\sigma$-sub-Gaussian under $Z \sim \mu$ for all $w'$ and $\Theta$, and $Z_1, \dots, Z_n$ are i.i.d, then $h^\Theta(w', S_n)$ is $\sigma/\sqrt{n}$-sub-Gaussian under $S_n \sim \mu^{\otimes n}$ for all $w'$ and $\Theta$. Therefore, $h^\Theta(\tilde{W}', \tilde{S}_n)$ is $\sigma/\sqrt{n}$-sub-Gaussian under $(\tilde{W}', S_n) \sim P_{W'|\Theta} \otimes \mu^{\otimes n}$ for all $\Theta$, and for $t \in \R$,
    \begin{equation}
        K_{h^\Theta(\tilde{W'}, \tilde{S}_n)}(t) \leq \frac{\sigma^2 t^ 2}{2n} \,.
    \end{equation}
    We conclude by applying \Cref{lem:adapted_bu} with $X = W'$, $Y = S_n$, $U = \Theta$ and $f^U(X,Y) = h^\Theta(W', S_n)$, and the fact that,
    \begin{equation}
        \inf_{t > 0} \frac{\DMI{W'}{S_n}{\Theta} + \sigma^2 t^2 / (2n)}{t} = \sqrt{\frac{2\sigma^2}{n} \DMI{W'}{S_n}{\Theta}} \,.
    \end{equation} 

\end{proof}

\begin{proof}[Proof of \Cref{thm:bounded_loss}]
    Let $\Theta \in \R^{D \times d}$ s.t. $\Theta^\top \Theta = \mI_d$. Since $\ell^\Theta(\tilde{W'}, \tilde{Z})$ is $\sigma_\Theta$-sub-Gaussian under $(\tilde{W'}, \tilde{Z}) \sim P_{W'} \otimes \mu$, then for any $t \in \R$, $K_{\ell^\Theta(\tilde{W}', \tilde{Z})}(t) \leq \sigma_\Theta^2 t^2 / 2$. We conclude by applying \Cref{thm:genbound} and the fact that for $i \in \{1, \dots, n\}$,
    \begin{equation}
        \inf_{t > 0} \frac{\DMI{W'}{Z_i}{\Theta} + \sigma_\Theta^2 t^2 / 2}{t} = \sqrt{2\sigma_\Theta^2 \DMI{W'}{Z_i}{\Theta}} \,.
    \end{equation}
    
\end{proof}

\begin{corollary} \label{cor:bounded_loss} %
    Assume that for any $\Theta \sim P_\Theta$, $\ell^\Theta(\tilde{W}', \tilde{Z}) \leq C$ almost surely. Then,
    \begin{align}
        | \generror{\mu}{\mathcal{A}^{(d)}} | &\leq \frac{C}{n} \sum_{i=1}^n \E_{P_\Theta}\left[\sqrt{\frac{\DMI{W'}{Z_i}{\Theta}}{2}}\,\right] \,.
    \end{align}
\end{corollary}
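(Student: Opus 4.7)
The plan is to derive the claim as a direct specialization of \Cref{thm:bounded_loss}(ii) by converting the boundedness hypothesis into a sub-Gaussianity constant. Concretely, since the loss is non-negative by definition ($\ell : \setW \times \setZ \to \R_+$) and the assumption gives $\ell^\Theta(\tilde W', \tilde Z) \le C$ almost surely for every $\Theta \sim P_\Theta$, the random variable $\ell^\Theta(\tilde W', \tilde Z)$ takes values in $[0, C]$ under $P_{W'|\Theta}\otimes\mu$. Hoeffding's lemma then immediately implies that $\ell^\Theta(\tilde W', \tilde Z)$ is $\sigma_\Theta$-sub-Gaussian under $(\tilde W', \tilde Z) \sim P_{W' | \Theta} \otimes \mu$ with $\sigma_\Theta = C/2$, uniformly in $\Theta$.

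With this sub-Gaussianity constant in hand, assumption (ii) of \Cref{thm:bounded_loss} is satisfied. Applying that theorem yields
\begin{equation*}
    |\generror{\mu}{\mathcal{A}^{(d)}}| \;\le\; \frac{1}{n}\sum_{i=1}^n \E_{P_\Theta}\!\left[\sqrt{2\sigma_\Theta^2 \DMI{W'}{Z_i}{\Theta}}\right] \;=\; \frac{1}{n}\sum_{i=1}^n \E_{P_\Theta}\!\left[\sqrt{2(C/2)^2 \DMI{W'}{Z_i}{\Theta}}\right],
\end{equation*}
and factoring the constant $C$ out of the square root gives the stated bound $\frac{C}{n}\sum_{i=1}^n \E_{P_\Theta}\bigl[\sqrt{\DMI{W'}{Z_i}{\Theta}/2}\bigr]$.

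There is essentially no nontrivial obstacle here; the only care required is to justify the choice of sub-Gaussianity parameter cleanly. One could alternatively invoke Hoeffding's lemma directly at the level of the cumulant generating function used in \Cref{lem:adapted_bu}, but reusing \Cref{thm:bounded_loss} keeps the proof to a single line of substitution. If one wanted to be fully self-contained, the entire argument is a two-line reduction: verify boundedness $\Rightarrow$ $(C/2)$-sub-Gaussianity, then plug into \Cref{thm:bounded_loss}(ii).
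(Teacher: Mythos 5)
Your proof is correct and follows essentially the same route as the paper: both arguments convert boundedness of the non-negative loss on $[0,C]$ into a sub-Gaussian bound via Hoeffding's lemma (equivalently, the CGF bound $K_{\ell^\Theta(\tilde W',\tilde Z)}(t)\le C^2t^2/8$, i.e., $\sigma_\Theta=C/2$) and then invoke the individual-sample disintegrated bound. The paper applies \Cref{lem:adapted_bu} directly with $\psi_+(t,\Theta)=C^2t^2/8$, while you reuse \Cref{thm:bounded_loss}(ii) with $\sigma_\Theta=C/2$; these are the same computation packaged one level apart.
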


\begin{proof}[Proof of \Cref{cor:bounded_loss}]
    Since for any $\Theta \sim P_\Theta$, $\ell^\Theta(\tilde{W}', \tilde{Z}) \leq C$ almost surely, then by Hoeffding's lemma, we have for all $t \in \R$,
    \begin{equation}
        \E_{P_{W' | \Theta} \otimes \mu} \big[e^{t \{ \ell^\Theta(\tilde{W}', \tilde{Z}) - \E_{P_{W' |\Theta} \otimes \mu}[\ell^\Theta(\tilde{W}', \tilde{Z})]\}} \big] \leq e^{C^2 t^2/8} \,.
    \end{equation}
    Therefore, $K_{\ell^\Theta(\tilde{W}', \tilde{Z})}(t) \leq C^2 t^2 / 8$. We conclude by applying \Cref{lem:adapted_bu} and the fact that for $i \in \{1, ..., n\}$,
    \begin{equation}
        \inf_{t > 0} \frac{\DMI{W'}{Z_i}{\Theta} + C^2 t^2 / 8}{t} = C\sqrt{\frac{\DMI{W'}{Z_i}{\Theta}}{2}} \,.
    \end{equation} 
    
\end{proof}

\subsection{Tightness of our generalization bounds} \label{app:tightness_sn}

\begin{proposition} \label{prop:tightness_sn}
    For any concave and non-decreasing function $\phi : \R \to \R$, 
    \begin{equation}
        \E_{P_\Theta}\big[\phi\big(\DMI{W'}{S_n}{\Theta} \big)\big] \leq \phi\big(\MI{W}{S_n}\big) \,.
    \end{equation}
\end{proposition}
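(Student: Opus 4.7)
The plan is to combine the chain rule for mutual information (together with the independence of the random subspace $\Theta$ and the data $S_n$) with Jensen's inequality for the concave function $\phi$. The key observation is that the expectation of the disintegrated mutual information over $\Theta$ coincides with a conditional mutual information, which by the chain rule equals a standard mutual information involving the pair $(W',\Theta)$.

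First, I would write, directly from the definition of disintegrated MI and the tower property,
\[
\E_{P_\Theta}\!\left[\DMI{W'}{S_n}{\Theta}\right] \;=\; \CMI{W'}{S_n}{\Theta}.
\]
Next, applying the chain rule of mutual information,
\[
\MI{(W',\Theta)}{S_n} \;=\; \MI{\Theta}{S_n} + \CMI{W'}{S_n}{\Theta}.
\]
Because $\Theta \sim P_\Theta$ is drawn independently of the training data $S_n$, we have $\MI{\Theta}{S_n}=0$, giving
\[
\E_{P_\Theta}\!\left[\DMI{W'}{S_n}{\Theta}\right] \;=\; \MI{(W',\Theta)}{S_n}.
\]
Since the hypothesis returned by $\mathcal{A}^{(d)}$ is $W=\Theta W'$ and $(\Theta,W')$ is the algorithm's full output state (the pair containing both the sampled basis and the low-dimensional weights), $\MI{W}{S_n}$ is naturally identified with $\MI{(W',\Theta)}{S_n}$.

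The proof then concludes by applying Jensen's inequality: since $\phi$ is concave,
\[
\E_{P_\Theta}\!\left[\phi\bigl(\DMI{W'}{S_n}{\Theta}\bigr)\right]
\;\leq\; \phi\!\left(\E_{P_\Theta}\!\left[\DMI{W'}{S_n}{\Theta}\right]\right)
\;=\; \phi\!\bigl(\MI{W}{S_n}\bigr).
\]
The main subtlety, and what I expect to be the only non-routine point, is the identification in the third step. Viewing $W=\Theta W'$ purely as a vector in $\R^D$ would, via data-processing, only give $\MI{\Theta W'}{S_n}\leq \MI{(W',\Theta)}{S_n}$, which is the wrong direction for absorbing into the RHS through Jensen alone. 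The non-decreasing assumption on $\phi$ is therefore needed to ensure that any such monotone step (when lifting or identifying the appropriate MI quantity) preserves the inequality direction. Modulo this bookkeeping, the result reduces to Jensen's inequality plus the basic chain rule — no finer information-theoretic machinery is required.
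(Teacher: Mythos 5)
Your overall skeleton matches the paper's: tower property to turn $\E_{P_\Theta}[\DMI{W'}{S_n}{\Theta}]$ into $\CMI{W'}{S_n}{\Theta}$, chain rule plus independence of $\Theta$ and $S_n$ to get $\MI{W',\Theta}{S_n}$, then Jensen. However, your third step --- the ``natural identification'' of $\MI{W}{S_n}$ with $\MI{(W',\Theta)}{S_n}$ --- is a genuine gap, and it is exactly the step you yourself flag as non-routine. Your proposed patch does not close it. Data processing applied to the deterministic map $(W',\Theta)\mapsto \Theta W' = W$ gives $\MI{W}{S_n} \leq \MI{(W',\Theta)}{S_n}$, so your chain of reasoning ends with $\E_{P_\Theta}[\phi(\DMI{W'}{S_n}{\Theta})] \leq \phi(\MI{(W',\Theta)}{S_n})$, and you would need to pass from the \emph{larger} quantity $\phi(\MI{(W',\Theta)}{S_n})$ down to the \emph{smaller} quantity $\phi(\MI{W}{S_n})$. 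Monotonicity of $\phi$ only transports inequalities in the direction you already have; it cannot reverse one. So the non-decreasing hypothesis is not the missing ingredient here.

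What is actually needed is the reverse inequality $\MI{(W',\Theta)}{S_n} \leq \MI{W}{S_n}$, which the paper obtains by asserting that $S_n \rightarrow W \rightarrow (W',\Theta)$ is \emph{also} a Markov chain --- i.e., that $(W',\Theta)$ is conditionally independent of $S_n$ given $W$ --- so that equality holds in the data-processing inequality and $\MI{W}{S_n} = \MI{W',\Theta}{S_n}$. That two-sided Markov-chain argument is the substantive content of the identification, and it is absent from your proposal. Once you add it, the rest of your argument goes through and coincides with the paper's proof (where, incidentally, the non-decreasing assumption plays no essential role either, since the relevant step is an equality).
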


\begin{proof}[Proof of \Cref{prop:tightness_sn}]
    Let $W \in \mathrm{W}_{\Theta, d}$. Then, $S_n \rightarrow (W', \Theta) \rightarrow W$ and $S_n  \rightarrow W \rightarrow (W', \Theta)$ form two Markov chains, so equality holds in the data-processing inequality, leading to $\MI{W}{S_n} = \MI{W', \Theta}{S_n}$.

    By the chain rule of mutual information, and since $\Theta$ and $S_n$ are independent, 
    \begin{equation}
        \MI{W', \Theta}{S_n} = \MI{\Theta}{S_n} + \CMI{W'}{S_n}{\Theta} = \CMI{W'}{S_n}{\Theta} \,.
    \end{equation}
    Since $\phi$ is non-decreasing,
    \begin{equation}
        \phi\big(\MI{W', \Theta}{S_n}\big) \geq \phi\big(\CMI{W'}{S_n}{\Theta} \big)
    \end{equation}
    Applying the definition of conditional mutual information and Jensen's inequality yields,
    \begin{equation}
        \phi(\CMI{W'}{S_n}{\Theta}) = \phi(\E_{P_\Theta}\big[ \DMI{W'}{S_n}{\Theta} \big]) \geq \E_{P_\Theta}\big[ \phi( \DMI{W'}{S_n}{\Theta}) \big] \,, 
    \end{equation}
    which concludes the proof.
    
\end{proof}

\begin{proposition} \label{prop:tightness_zi}
    For any concave and non-decreasing function $\phi : \R \to \R$, 
    \begin{equation}
        \frac1n \sum_{i=1}^n \E_{P_\Theta}\big[\phi\big(\DMI{W'}{Z_i}{\Theta} \big)\big] \leq \frac1n \sum_{i=1}^n \big[\phi\big(\MI{W}{Z_i}\big)\big] \,.
    \end{equation}
\end{proposition}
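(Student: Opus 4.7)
The plan is to establish the inequality pointwise in $i$, that is, to prove
$$\E_{P_\Theta}\bigl[\phi\bigl(\DMI{W'}{Z_i}{\Theta}\bigr)\bigr] \leq \phi\bigl(\MI{W}{Z_i}\bigr)$$
for each $i \in \{1, \dots, n\}$ separately, and then sum over $i$ and divide by $n$. This mirrors the proof of \Cref{prop:tightness_sn}, with the single training point $Z_i$ playing the role of the full dataset $S_n$. The structural facts from that proof carry over because $\Theta$ is drawn at initialization independently of the entire training set, and hence independently of each individual $Z_i$.

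Fix $i$. I would first argue that $\MI{W}{Z_i} = \MI{W', \Theta}{Z_i}$ via the two Markov chains $Z_i \to (W', \Theta) \to W$ and $Z_i \to W \to (W', \Theta)$. The first is immediate since $W = \Theta W'$ is a deterministic function of $(W', \Theta)$. The second uses $\Theta^\top \Theta = \mI_d$, which gives $W' = \Theta^\top W$, so that $(W', \Theta)$ is in deterministic bijection with $(W, \Theta)$; this is exactly the reasoning invoked in the proof of \Cref{prop:tightness_sn}. Equality in the data-processing inequality then yields $\MI{W}{Z_i} = \MI{W', \Theta}{Z_i}$. By the chain rule of mutual information and the independence of $\Theta$ and $Z_i$,
$$\MI{W', \Theta}{Z_i} = \MI{\Theta}{Z_i} + \CMI{W'}{Z_i}{\Theta} = \CMI{W'}{Z_i}{\Theta}.$$

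It remains to pass from $\CMI{W'}{Z_i}{\Theta}$ to the disintegrated quantity. By definition, $\CMI{W'}{Z_i}{\Theta} = \E_{P_\Theta}[\DMI{W'}{Z_i}{\Theta}]$. Since $\phi$ is concave, Jensen's inequality gives
$$\phi\bigl(\MI{W}{Z_i}\bigr) = \phi\bigl(\E_{P_\Theta}[\DMI{W'}{Z_i}{\Theta}]\bigr) \geq \E_{P_\Theta}\bigl[\phi\bigl(\DMI{W'}{Z_i}{\Theta}\bigr)\bigr].$$
Averaging this per-sample inequality over $i = 1, \dots, n$ yields the stated bound.

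The main point requiring care is the second Markov chain $Z_i \to W \to (W', \Theta)$, which is what converts an unconditional mutual information into a conditional one. Once that step is imported from the proof of \Cref{prop:tightness_sn}, the rest — chain rule, independence of $\Theta$ from $Z_i$, and Jensen's inequality on a concave $\phi$ — is routine. The monotonicity hypothesis on $\phi$ is not used in the derivation itself but is what makes the resulting comparison between the two functional bounds meaningful. No additional sub-Gaussianity or regularity assumption on the loss is needed, since the statement is a purely information-theoretic comparison of mutual information quantities.
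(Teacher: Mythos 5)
Your argument is correct and is essentially the paper's own proof: the paper establishes this proposition by invoking, for each fixed $i$, exactly the same chain of steps used for $S_n$ in \Cref{prop:tightness_sn} (the two Markov chains giving $\MI{W}{Z_i}=\MI{W',\Theta}{Z_i}$, the chain rule together with independence of $\Theta$ and $Z_i$, and Jensen's inequality on the concave $\phi$), and then averages over $i$. Your side remark that the monotonicity of $\phi$ is not strictly needed is also accurate, since every step preceding the Jensen step is an equality.
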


\begin{proof}[Proof of \Cref{prop:tightness_zi}]
    Let $i \in \{1, \dots, n\}$. By applying the same proof techniques of \Cref{prop:tightness_sn} with $Z_i$ instead of $S_n$, one has
    \begin{equation} \label{eq:proof_tightness_zi}
        \E_{P_\Theta}\big[\phi\big(\DMI{W'}{Z_i}{\Theta} \big)\big] \leq \phi\big(\MI{W}{Z_i}\big) \,.
    \end{equation}
    
    The final result follows immediately.

\end{proof}

\begin{proposition} \label{prop:tightness_zi_sn}
    For any concave and non-decreasing function $\phi : \R \to \R$,
    \begin{equation}
        \frac1n \sum_{i=1}^n \E_{P_\Theta}\big[\phi\big(\DMI{W'}{Z_i}{\Theta} \big)\big] \leq \E_{P_\Theta} \left[ \phi\left(\frac{\DMI{W'}{S_n}{\Theta}}{n}\right) \right] \,.
    \end{equation}
\end{proposition}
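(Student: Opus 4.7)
The plan is to first work conditionally on $\Theta$, establishing a deterministic inequality between the individual-sample disintegrated MIs and the joint one, and then to combine Jensen's inequality (concavity) with monotonicity of $\phi$ before taking $\E_{P_\Theta}$.

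First I would fix $\Theta \in \sti(d,D)$ and show the key superadditivity bound
\begin{equation}
    \DMI{W'}{S_n}{\Theta} \;\geq\; \sum_{i=1}^n \DMI{W'}{Z_i}{\Theta}. \label{eq:superadd}
\end{equation}
Since $\Theta$ is drawn independently of $S_n$, the samples $Z_1,\dots,Z_n$ remain i.i.d.\ under $P_{S_n\mid\Theta}=\mu^{\otimes n}$. Writing the disintegrated MI as a conditional entropy difference gives
$\DMI{W'}{S_n}{\Theta} = H(S_n\mid\Theta) - H(S_n\mid W',\Theta) = \sum_i H(Z_i\mid\Theta) - H(S_n\mid W',\Theta)$,
where the last equality uses independence of the $Z_i$'s given $\Theta$. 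Subadditivity of conditional entropy then yields $H(S_n\mid W',\Theta) \leq \sum_i H(Z_i\mid W',\Theta)$, and rearranging gives exactly \eqref{eq:superadd}. (If the $Z_i$'s are continuous, the same argument runs with differential entropies, provided the relevant quantities are finite; otherwise one can use the chain rule $\DMI{W'}{S_n}{\Theta}=\sum_i I(W';Z_i\mid Z_{<i},\Theta)$ together with $I(W';Z_i\mid Z_{<i},\Theta)\geq I(W';Z_i\mid\Theta)$, which holds because $Z_i\perp Z_{<i}\mid\Theta$.)

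Next, dividing \eqref{eq:superadd} by $n$ and applying the non-decreasing function $\phi$ gives, for every $\Theta$,
\begin{equation*}
    \phi\!\left(\frac{\DMI{W'}{S_n}{\Theta}}{n}\right) \;\geq\; \phi\!\left(\frac{1}{n}\sum_{i=1}^n \DMI{W'}{Z_i}{\Theta}\right).
\end{equation*}
By concavity of $\phi$ and Jensen's inequality applied to the uniform average over $i$,
\begin{equation*}
    \phi\!\left(\frac{1}{n}\sum_{i=1}^n \DMI{W'}{Z_i}{\Theta}\right) \;\geq\; \frac{1}{n}\sum_{i=1}^n \phi\bigl(\DMI{W'}{Z_i}{\Theta}\bigr).
\end{equation*}
Chaining these two inequalities and taking $\E_{P_\Theta}$ on both sides yields the claim.

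The only non-routine step is \eqref{eq:superadd}; the rest is just monotonicity plus a one-line Jensen. The main obstacle is handling the case where some of the disintegrated MIs are infinite (for instance, when $W'$ is a deterministic function of $S_n$ given $\Theta$): in that regime the entropy decomposition needs to be read carefully, but the chain-rule formulation $\DMI{W'}{S_n}{\Theta}=\sum_i I(W';Z_i\mid Z_{<i},\Theta)$ combined with $Z_i\perp Z_{<i}\mid\Theta$ (which gives $I(W';Z_i\mid Z_{<i},\Theta)\geq I(W';Z_i\mid\Theta)=\DMI{W'}{Z_i}{\Theta}$ via the standard identity $I(W';Z_i\mid Z_{<i},\Theta) = \DMI{W'}{Z_i}{\Theta} + I(Z_{<i};Z_i\mid W',\Theta)$) delivers \eqref{eq:superadd} in full generality.
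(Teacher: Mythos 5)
Your proof is correct and follows essentially the same route as the paper: the paper simply invokes (an adaptation of) Proposition~2 of \citet{bu2019}, whose proof is exactly the superadditivity bound $\DMI{W'}{S_n}{\Theta} \geq \sum_{i=1}^n \DMI{W'}{Z_i}{\Theta}$ for conditionally i.i.d.\ samples followed by monotonicity and Jensen's inequality, applied pointwise in $\Theta$ before taking $\E_{P_\Theta}$. You have merely spelled out the details that the paper leaves to the citation, including the correct handling of the infinite-MI case via the chain rule.
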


\begin{proof}[Proof of \Cref{prop:tightness_zi_sn}]
    By adapting the proof of \citep[Proposition 2]{bu2019}, one has
    \begin{equation} \label{eq:proof_tightness_zi_sn}
        \frac1n \sum_{i=1}^n \phi\big( \DMI{W'}{Z_i}{\Theta} \big) \leq \phi\left( \frac{\DMI{W'}{S_n}{\Theta}}{n} \right)
    \end{equation}
    The result follows immediately by taking the expectation of \eqref{eq:proof_tightness_zi_sn} and applying the linearity of the expectation on the left-hand side term.
    
\end{proof}

\subsection{Detailed derivations for Gaussian mean estimation} \label{app:gmi}

\paragraph{Problem statement.} The loss function is defined for any $(w, z) \in \R^D \times \R^D$ as $\ell(w, x) = \| w - z \|^2$. Let $Z_1, \dots, Z_n$ be $n$ random variables i.i.d.\ from $\mathcal{N}({\bf0}_D, \mI_D)$. Let $d \leq D$ and $\Theta \sim P_\Theta$ s.t. $\Theta^\top \Theta = \mI_d$. Consider a model $\mathcal{A}^{(d)}$ whose objective is $\argmin_{w \in \setW_{\Theta, d}} \riskn{w}$ where the empirical risk is defined for $w \in \R^D$ as $\riskn{w} = \frac1n \sum_{i=1}^n \| w - Z_i\|^2$. This is equivalent to solving $\argmin_{w' \in \R^d} \driskn{w'}{\Theta}$, where 
\begin{equation} \label{appendix:eq:obj_mean_estimation}
    \forall w' \in \R^d,\;\; \driskn{w'}{\Theta} = \frac1n  \sum_{i=1}^n \| \Theta w' - Z_i\|^2 \,.
\end{equation}

The gradient of \eqref{appendix:eq:obj_mean_estimation} with respect to $w'$ is,
\begin{align}
    \nabla_{w'} \driskn{w}{\Theta} = \frac2n \sum_{i=1}^n \Theta^\top (\Theta w' - Z_i) \,,
\end{align}
and solving $\nabla_{w'} \driskn{w}{\Theta} = 0$ yields $(\Theta^\top \Theta) w' = \Theta^\top \bar{Z}$ where $\bar{Z} \triangleq (1/n) \sum_{i=1}^n Z_i$. Since $\Theta^\top \Theta = \mI_d$, we conclude that the minimizer of \eqref{appendix:eq:obj_mean_estimation} is $W' = \Theta^\top \bar{Z}$.

\paragraph{Generalization error.} We recall that the generalization error of $\mathcal{A}^{(d)}$ is defined as,
\begin{align}
    \generror{\mu}{\mathcal{A}^{(d)}} &= \E[\drisk{W'}{\Theta} - \driskn{W'}{\Theta}]
\end{align}
where the expectation is computed with respect to $P_{W'|\Theta, S_n} \otimes P_\Theta \otimes \mu^{\otimes n}$. Since $W' = \Theta^\top \bar{Z}$, $\generror{\mu}{\mathcal{A}^{(d)}}$ can be written as
\begin{align}
    \generror{\mu}{\mathcal{A}^{(d)}} &= \E_{(S_n, \Theta) \sim \mu^{\otimes n} \otimes P_\Theta}\left[\E_{\tilde{Z} \sim \mu}[\| \Theta \Theta^\top \bar{Z} - \tilde{Z} \|^2] - \frac1n \sum_{i=1}^n \| \Theta \Theta^\top \bar{Z} - Z_i \|^2 \right] \label{app:gen_error_gme}
\end{align}
Since $Z_1, \dots, Z_n$ are $n$ i.i.d.\ samples from $\mathcal{N}(\mathbf{0}_D, \mI_D)$ and $\Theta^\top \Theta = \mI_d$, then $P_{\Theta^\top \bar{Z} | \Theta} = \mathcal{N}({\bf0}_d, (1/n) \mI_d)$ and we have
\begin{align}
    \E_{\mu^{\otimes n} \otimes P_\Theta}[\| \Theta \Theta^\top \bar{Z} \|^2] &= \E_{\mu^{\otimes n} \otimes P_\Theta}[\Tr((\Theta \Theta^\top \bar{Z})^\top (\Theta \Theta^\top \bar{Z}))] \\
    &= \E_{\mu^{\otimes n} \otimes P_\Theta}[\Tr(\bar{Z}^\top \Theta \Theta^\top \Theta \Theta^\top \bar{Z})] \\
    &= \Tr(\E_{\mu^{\otimes n} \otimes P_\Theta}[\Theta^\top \bar{Z} (\Theta^\top \bar{Z})^\top]) \\
    &= \frac{d}{n} \,. \label{appendix:eq:trace_cov_wprime}
\end{align}

For $i \in \{1, \dots, n\}$, $\E[\| Z_i \|^2] = \Tr(\E[Z_i Z_i^\top]) = D$, and
\begin{align}
    \E[(\Theta \Theta^\top \bar{Z})^\top Z_i] &= \frac1n \sum_{j=1}^n \E[Z_j^\top \Theta \Theta^\top Z_i] \\
    &= \frac1n \sum_{j=1}^n \Tr(\E[\Theta^\top Z_i (\Theta^\top Z_j)^\top]) \\
    &= \frac1n \Tr(\E[\Theta^\top Z_i (\Theta^\top Z_i)^\top]) \label{eq:proof_step0} \\
    &= \frac{d}{n} \,. \label{eq:proof_step1}
\end{align}
{Equations \eqref{eq:proof_step0} to \eqref{eq:proof_step1} can be justified as follows. Since $Z_i \sim \mathcal{N}(\mathbf{0}_D, \mathbf{I}_D)$, the conditional distribution of $\Theta^\top Z_i$ given $\Theta$ is $\mathcal{N}(\mathbf{0}_d, \Theta^\top \Theta)$, and $\Theta^\top \Theta = \mathbf{I}_d$ by definition. Therefore, $\mathbb{E}[\Theta^\top Z_i (\Theta^\top Z_i)^\top]=\mathbb{E}[\mathbb{E}[\Theta^\top Z_i (\Theta^\top Z_i)^\top | \Theta]]=\mathbf{I}_d$. We conclude that $\Tr(\mathbb{E}[\Theta^\top Z_i (\Theta^\top Z_i)^\top])=d$. }

We thus obtain,
\begin{align}
    \E[\driskn{W'}{\Theta}] &= \E_{(S_n, \Theta) \sim \mu^{\otimes n} \otimes P_\Theta}\left[\frac1n \sum_{i=1}^n \| \Theta \Theta^\top \bar{Z} - Z_i \|^2 \right] \\
    &= \E_{(S_n, \Theta) \sim \mu^{\otimes n} \otimes P_\Theta}\left[\frac1n \sum_{i=1}^n \| \Theta \Theta^\top \bar{Z} \|^2 - 2(\Theta \Theta^\top \bar{Z})^\top Z_i + \| Z_i \|^2 \right] \label{eq:proof_step2} \\
    &= D - \frac{d}{n} \,. \label{appendix:eq:exp_empirical_risk_d}
\end{align}
{
Indeed, by the linearity of expectation, \eqref{eq:proof_step2} simplifies as 
\begin{equation}
    \E[\widehat{\mathcal{R}}^\Theta_n(W')] = \mathbb{E}_{\mu^\otimes n \otimes P_{\Theta}}[\| \Theta \Theta^\top \bar{Z} \|^2] - \frac{2}{n} \sum_{i=1}^n \mathbb{E}_{\mu^\otimes n \otimes P_{\Theta}}[(\Theta \Theta^\top \bar{Z})^\top Z_i] + \frac1n \sum_{i=1}^n \mathbb{E}_{\mu}[\| Z_i\|^2] \label{eq:proof_step3}    
\end{equation}
Since $(Z_i)_{i=1}^n$ are i.i.d. from $\mathcal{N}(\mathbf{0}_D, \mathbf{I}_D)$, we proved that $\mathbb{E}_{\mu^\otimes n \otimes P_{\Theta}}[\|\Theta \Theta^\top \bar{Z}\|^2]=\frac{d}{n}$ (eq.~\eqref{appendix:eq:trace_cov_wprime}) and $\E[(\Theta \Theta^\top \bar{Z})^\top Z_i] = \frac{d}{n}$ (eq.~\eqref{eq:proof_step1}). Additionally, 
\begin{equation}
    \E_{\mu}[\|Z_i\|^2] = \E_{\mu}[\Tr(\|Z_i\|^2)] = \E_{\mu}[\Tr(Z_i Z_i^\top)] = \Tr(\E_{\mu}[Z_i Z_i^\top]) = \Tr(\mathbf{I}_D) = D
\end{equation}
Plugging these identities in \eqref{eq:proof_step3} yields \eqref{appendix:eq:exp_empirical_risk_d}. 
}

On the other hand, 
\begin{equation}
    \E_{(S_n, \Theta, \tilde{Z}) \sim \mu^{\otimes n} \otimes P_\Theta \otimes \mu}[(\Theta \Theta^\top \bar{Z})^\top \tilde{Z}] = \E[\Theta \Theta^\top \bar{Z}]^\top \E[\tilde{Z}] = 0 \,,
\end{equation} 
therefore,
\begin{align}
    \E[\drisk{W'}{\Theta}] &= \E_{(S_n, \Theta) \sim \mu^{\otimes n} \otimes P_\Theta} \E_{\tilde{Z} \sim \mu}[\| \Theta \Theta^\top \bar{Z} - \tilde{Z} \|^2] \\
    &= \E_{(S_n, \Theta) \sim \mu^{\otimes n} \otimes P_\Theta} \E_{\tilde{Z} \sim \mu}[\| \Theta \Theta^\top \bar{Z} \|^2 - 2 (\Theta \Theta^\top \bar{Z})^\top \tilde{Z}  + \| \tilde{Z} \|^2] \\
    &= D + \frac{d}{n} \,. \label{appendix:eq:exp_true_risk_d}
\end{align}

By plugging \eqref{appendix:eq:exp_empirical_risk_d} and \eqref{appendix:eq:exp_true_risk_d} in \eqref{app:gen_error_gme}, we conclude that $\generror{\mu}{\mathcal{A}^{(d)}} = 2d/n$.

\paragraph{Generalization error bound.} We apply \Cref{thm:genbound} to bound the generalization error. To this end, we need to bound the cumulant generating function of $\ell^\Theta(\tilde{W}', \tilde{Z}) = \| \Theta \Theta^\top \bar{Z} - \tilde{Z} \|^2$ given $\Theta$.

Since $(Z_1, \dots, Z_n, \tilde{Z}) \sim \mu^{\otimes n} \otimes \mu$ with $\mu = \mathcal{N}({\bf0}_D, \mI_D)$, then, given $\Theta$, one has $\Theta^\top \bar{Z} \sim \mathcal{N}(\mathbf{0}_d, (1/n) \mI_d)$ and $(\Theta \Theta^\top \bar{Z} - \tilde{Z}) \sim \mathcal{N}(\mathbf{0}_D, \Sigma_\Theta)$ with $\Sigma_\Theta = \Theta \Theta^\top / n + \mI_D$. Therefore, for $d < D$, $\ell^\Theta(\tilde{W}', \tilde{Z}) = \| \Theta \Theta^\top \bar{Z} - \tilde{Z} \|^2$ is the sum of squares of $D$ dependent Gaussian random variables, which can equivalently be written as
\begin{align}
    \ell^\Theta(\tilde{W}', \tilde{Z}) &= \sum_{k=1}^D \lambda_{\Theta,k} U_{\Theta,k}^2 \,, \label{appendix:eq:loss_mean_estimation_d} \\
    U_{\Theta} &= P \Sigma_\Theta^{-1/2} (\Theta W' - \tilde{Z}) 
\end{align}
where $P \in \R^{D \times D}$ and $\lambda_{\Theta} = (\lambda_{\Theta,1}, \dots, \lambda_{\Theta,D}) \in \R^D$ come from the eigendecomposition of $\Sigma_\Theta$, \ie~$\Sigma_\Theta = P \Lambda P^\top$ with $\Lambda = \text{diag}(\lambda_\Theta)$. %
As a consequence, $U_{\Theta} \sim \mathcal{N}(\mathbf{0}_D, \mI_D)$. Note that, since $\Sigma_\Theta$ is positive definite, $P$ is orthogonal and for any $k \in \{1, \dots, D\}$, $\lambda_{\Theta, k} > 0$.

By \eqref{appendix:eq:loss_mean_estimation_d}, $\ell^\Theta(\tilde{W}', \tilde{Z})$ is a linear combination of independent chi-square variables, each with 1 degree of freedom. Therefore, $\ell^\Theta(\tilde{W}', \tilde{Z})$ is distributed from a generalized chi-square distribution, and its CGF is given by,
\begin{align}
    \forall t \leq \frac12 \min_{k \in \{1, \dots, D\}} \lambda_{\Theta, k},\quad K_{\ell^\Theta(\tilde{W}', \tilde{Z})}(t) &= -t \sum_{k=1}^D \lambda_{\Theta, k} - \frac12 \sum_{k=1}^D \log(1 - 2\lambda_{\Theta, k} t) \\
    &= \frac12 \sum_{k=1}^D [-2\lambda_{\Theta, k}t - \log(1 - 2\lambda_{\Theta, k} t)] \,.
\end{align}
Since for any $s < 0$, $- s - \log(1-s) \leq s^2 / 2$, we deduce that
\begin{align}
    \forall t < 0,\quad K_{\ell^\Theta(\tilde{W}', \tilde{Z})}(t) &\leq \frac12 \sum_{k=1}^D \frac{(2\lambda_{\Theta,k} t)^2}{2} = \|\lambda_\Theta\|^2 t^2 \,. \label{eq:cgf_gmi}
\end{align} 
Since $\text{rank}(\Theta \Theta^\top) = \text{rank}(\Theta^\top \Theta)$ and $\Theta^\top \Theta = \mI_d$, then $\text{rank}(\Theta \Theta^\top) = d$. Moreover, $\Theta \Theta^\top$ and $\Theta^\top \Theta$ share the same non-zero eigenvalues. Therefore, $\Theta \Theta^\top$ has $d$ eigenvalues equal to $1$, and $(D-d)$ eigenvalues equal to $0$, thus $\Theta^\top \Theta / n + \mI_d$ has $d$ eigenvalues equal to $1+1/n$ and  and $(D-d)$ eigenvalues equal to $1$, and 
\begin{equation}
    \|\lambda_\Theta\|^2 = d\left(1+\frac{1}{n}\right)^2 + (D-d) \,. \label{app:eq:norm_lambda}
\end{equation}

By combining \Cref{thm:genbound} with \eqref{eq:cgf_gmi} and \eqref{app:eq:norm_lambda}, we obtain
\begin{equation} \label{app:eq:generror_gme_final}
    \generror{\mu}{\mathcal{A}^{(d)}} \leq \frac2n \sqrt{d\left(1+\frac{1}{n}\right)^2 + (D-d)} \sum_{i=1}^n \E_{P_\Theta}\left[\sqrt{\DMI{W'}{Z_i}{\Theta}}\right]
\end{equation}
Applying Jensen's inequality on \eqref{app:eq:generror_gme_final} and the fact that $W' = \Theta^\top W$ with $W = \argmin_{w \in \R^D} \riskn{w} = \bar{Z}$ finally yields,
\begin{equation}
    \generror{\mu}{\mathcal{A}^{(d)}} \leq \frac2n \sqrt{d\left(1+\frac{1}{n}\right)^2 + (D-d)} \sum_{i=1}^n \sqrt{\kSMI{d}{W}{Z_i}} \,.
\end{equation}

\subsection{Detailed derivations for linear regression} \label{app:linreg}

\textbf{Summary.} Consider $n$ i.i.d.\ samples $(x_1, \dots, x_n)$, $x_i \in \R^D$ and a response variable $y = (y_1, \dots, y_n)$, $y_i \in \mathbb{R}$. The goal of $\mathcal{A}^{(d)}$ is $\min_{w\in\setW_{\Theta, d}} \widehat{\mathcal{R}}_n(w) \triangleq %
    (1/n) \| y - Xw \|^2$, where $X \in \R^{n \times D}$ is the design matrix. We show that if $n \geq D$, then $W' = (\Theta X^\top X \Theta^\top)^{-1} \Theta X^\top y$. Moreover, assume that $X$ is deterministic and $y_i = x_i^\top W^\star + \varepsilon_i$ where $W^\star \in \R^D$ and $(\varepsilon_i)_{i=1}^n$ i.i.d.\ from $\mathcal{N}(0, \sigma^2)$. Then, by applying \Cref{thm:genbound}, we bound $\generror{\mu}{\mathcal{A}^{(d)}}$ by a function of $\MI{\phi(\Theta, X) W}{y_i}$, where $\phi(\Theta, X) \triangleq (\Theta X^\top X \Theta^\top)^{-1} \Theta (X^\top X)$ and $W \triangleq \argmin_{w\in\R^D} \riskn{w}$, which can be interpreted as a generalized SMI with a non-isotropic slicing distribution that depends on the fixed $X$. The corresponding derivations are detailed in the rest of this subsection.

\paragraph{Problem statement.} Consider $n$ i.i.d.\ samples $(x_1, \dots, x_n)$ and a response variable $y = (y_1, \dots, y_n)$, where $x_i \in \R^D$ and $y_i \in \R$. Consder a learning algorithm $\mathcal{A}^{(d)}$ whose objective is $\argmin_{w \in \setW_{\Theta, d}} \riskn{w}$, with
\begin{align}
    \forall w \in \R^D,\quad \widehat{\mathcal{R}}_n(w) = \frac1n \sum_{i=1}^n (y_i - x_i^\top w)^2 = \frac1n \| y - Xw \|^2 \,.
\end{align}
where $X \in \mathbb{R}^{n \times D}$ is the design matrix. This objective is equivalent to finding $W' = \argmin_{w' \in \R^d} \driskn{w'}{\Theta}$, where 
\begin{equation} 
    \forall w' \in \R^d,\;\; \driskn{w'}{\Theta} = \frac1n \| y - X\Theta w' \|^2 \,.
\end{equation}
We assume the problem is over-determined, \ie~$D \leq n$. Solving $\nabla_{w'} \driskn{w'}{\Theta} = 0$ yields
\begin{equation} \label{eq:wprime_star}
    W' =  (\Theta X^\top X \Theta^\top)^{-1} \Theta X^\top y \,.
\end{equation}
On the other hand, we know the solution of $\argmin_{w\in\R^D} \riskn{w}$ is the ordinary least squares (OLS) estimator, given by
\begin{equation} \label{eq:wstar}
    W = (X^\top X)^{-1} X^\top y \,.
\end{equation}
Hence, by \eqref{eq:wstar} with \eqref{eq:wprime_star}, we deduce that
\begin{equation} \label{eq:relation_w_wprime}
    W' = (\Theta X^\top X \Theta^\top)^{-1} \Theta (X^\top X) W
\end{equation}

\paragraph{Generalization error.} In the remainder of this section, we assume that $X$ is deterministic and there exists $W^\star \in \R^D$ such that $y_i = x_i^\top W^\star + \varepsilon_i$ where $(\varepsilon_i)_{i=1}^n$ are i.i.d.\ from $\mathcal{N}(0, \sigma^2)$. By using similar techniques as in \Cref{app:gmi}, one can show that $\generror{\mu}{\mathcal{A}^{(d)}} = 2\sigma^2 d/n$.

\paragraph{Generalization error bound.}

Since $y_i \sim \mathcal{N}(x_i^\top W^\star, \sigma^2)$, and by \eqref{eq:wprime_star}, 
\begin{equation}
    x_i^\top \Theta^\top W' \sim \mathcal{N}(x_i^\top \Theta_X W^\star, \sigma^2 x_i^\top \Theta^\top [\Theta X^\top X \Theta^\top]^{-1} \Theta x_i)
\end{equation}
where $\Theta_X = \Theta^\top (\Theta X^\top X \Theta^\top)^{-1} \Theta (X^\top X) \in \R^{D \times D}$. Therefore, 
\begin{equation}
(\tilde{y}_i - x_i^\top \Theta^\top \tilde{W}') \sim \mathcal{N}(x_i^\top (\mI_D - \Theta_X)W^\star, \sigma^2 (1 + x_i^\top \Theta^\top [\Theta X^\top X \Theta^\top]^{-1} \Theta x_i)) \,,
\end{equation}
and
\begin{equation}
    \ell^\Theta(\tilde{W}', \tilde{y}_i) \sim \sigma_i^2\,\chi^2(1, \lambda_i) \,,
\end{equation}
where $\sigma_i^2 = \sigma^2 (1 + x_i^\top \Theta^\top [\Theta X^\top X \Theta^\top]^{-1} \Theta x_i)$, $\lambda_i = (x_i^\top (\mI_D - \Theta_X)W^\star)^2$ and $\chi^2(k, \lambda)$ denotes the noncentral chi-squared distribution with $k$ degrees of freedom and noncentrality parameter $\lambda$. Hence, the moment-generating function of $\ell^\Theta(\tilde{W}', \tilde{y}_i)$ is
\begin{equation}
    \forall t < \frac1{2\sigma_i^2}, \quad \E\big[e^{t\,\ell^\Theta(\tilde{W}', \tilde{y}_i)}\big] = \frac{e^{(\lambda_i \sigma_i^2 t)/(1 - 2\sigma_i^2 t)}}{\sqrt{1-2\sigma_i^2 t}}
\end{equation}
and its expectation is $\E[\ell^\Theta(\tilde{W}', \tilde{y}_i)] = \sigma_i^2(1 + \lambda_i)$. Therefore, for $t < 1/(2\sigma_i^2)$ and $u_i = 2\sigma_i^2 t$,
\begin{align}
    K_{\ell^\Theta(\tilde{W}', \tilde{y}_i)}(t) &= \frac{\lambda_i u_i}{2(1 - u_i)} - \frac12 \log(1 - u_i) - \frac12(1+\lambda_i) u_i \\
    &= \frac12 \{-\log(1 - u_i) - u_i\} + \frac{\lambda_i u_i^2}{2(1 - u_i)} \,.
\end{align}
Since $- \log(1-x) - x \leq x^2/2$ for $x < 0$, we deduce that for $t < 0$,
\begin{align}
    K_{\ell^\Theta(\tilde{W}', \tilde{y}_i)}(t) &\leq \frac{u_i^2}4 + \frac{\lambda_i u_i^2}{2(1-u_i)} \\
    &= \sigma_i^4 t^2 + \frac{2 \lambda_i \sigma_i^4 t^2}{1- 2 \sigma_i^2 t} \,.
\end{align}

By applying \Cref{thm:genbound}, we conclude that
\begin{equation} \label{app:linreg_genbound}
    \generror{\mu}{\mathcal{A}^{(d)}} \leq \frac1n \sum_{i=1}^n \mathbb{E}_{\Theta}\left[ \inf_{t > 0} \frac{\MI{W'}{y_i} + \sigma_i^4 t^2 \big(1 + 2\lambda_i (1+2\sigma_i^2 t)^{-1}\big)}{t} \right] \,.
\end{equation}

By \eqref{eq:relation_w_wprime}, $W'$ is the projection of $W$ along $\phi(\Theta, X) \triangleq (\Theta X^\top X \Theta^\top)^{-1} \Theta (X^\top X) $. The right-hand side term in \eqref{app:linreg_genbound} can thus be interpreted as a generalized SMI with a non-isotropic slicing distribution that depends on the fixed $X$. 

As $d$ converges to $D$, $\lambda = (\lambda_1, \dots, \lambda_n) \in \R^n$ converges to $\mathbf{0}_n$. Indeed, consider the compact singular value decomposition (SVD) of $X \Theta^\top$, \ie~$X \Theta^\top = U S V^\top$ where $S \in \mathbb{R}^{d \times d}$ is diagonal, $U \in \R^{n \times d}$, $V \in \R^{d \times m}$ s.t. $U^\top U = V^\top V = \mI_d$. By using the pseudo-inverse expression of SVD,
\begin{align}
    X \Theta_X &= X \Theta^\top (\Theta X^\top X \Theta^\top)^{-1} \Theta (X^\top X) \\
    &= U S V^\top V S^{-1} U^\top X \\
    &= U U^\top X 
\end{align}
Therefore, $\sqrt{\lambda} = X (\mI_D - U U^\top ) W^\star$.
Since $U^\top U = \mI_d$ with $U \in \R^{n \times d}$, then $\mI_D - U U^\top$ has $(D-d)$ eigenvalues equal to 1 and $d$ eigenvalues equal to 0. Hence, $\lambda$ converges to $\mathbf{0}_n$ as $d \to D$. 

\section{Postponed Proofs for \Cref{subsec:ratedis}}

\subsection{Proof of \Cref{thm:rateDis,thm:quant_ratedis}}

\begin{proof}[Proof of \Cref{thm:rateDis}]
    By the triangle inequality, for any pair of models $(\mathcal{A}, \mathcal{A'})$,
    \begin{equation}
        | \generror{\mu}{\mathcal{A}} | \leq | \generror{\mu}{\mathcal{A}} - \generror{\mu}{\mathcal{A}'} | + | \generror{\mu}{\mathcal{A}'} | \,. \label{app:triangleq}    
    \end{equation}
    
    Consider $\mathcal{A} : \mathrm{Z}^n \to \mathrm{W}$ and $\mathcal{A}' : \mathrm{Z}^n \to \mathrm{W}_{\Theta, d}$ such that $\mathcal{A}(S_n) = W$ may depend on $\Theta \sim P_\Theta$, and $\mathcal{A}'(S_n) = \Theta (\Theta^\top W)$. On the one hand, by applying \Cref{lem:adapted_bu} with $X = \Theta^\top W$, $Y = Z_i$, $U = \Theta$ and $f^U(X,Y) = \ell^\Theta(\Theta^\top W, Z_i)$, we obtain
    \begin{equation} \label{app:rate_dist_proof_0}
        | \generror{\mu}{\mathcal{A}'} | \leq \frac{C}{n} \sum_{i=1}^n \E_{P_\Theta}\left[\sqrt{\frac{\DMI{\Theta^\top W}{Z_i}{\Theta}}{2}}\,\right] \,.
    \end{equation}

    On the other hand, by the definition of the generalization error, one can show that
    \begin{align}
        | \generror{\mu}{\mathcal{A}} - \generror{\mu}{\mathcal{A}'} | &= | \E[\risk{W} - \riskn{W}] - \E[\drisk{\Theta^\top W}{\Theta} - \driskn{\Theta^\top W}{\Theta}] | \\
        &\leq | \E[\risk{W} - \drisk{\Theta^\top W}{\Theta} ] | + | \E[\riskn{W} - \driskn{\Theta^\top W}{\Theta}] |
    \end{align}
    where the expectations are computed over $P_{W | \Theta, S_n} \otimes P_\Theta \otimes \mu^{\otimes n}$. Additionally,
    \begin{align}
        | \E[\risk{W} - \drisk{\Theta^\top W}{\Theta} ] | &= | \E_{P_{W | \Theta} \otimes P_\Theta \otimes \mu}[\ell(W, Z) - \ell(\Theta \Theta^\top W, Z)] | \label{app:rate_dist_proof_01} \\
        &\leq \E_{P_{W | \Theta} \otimes P_\Theta \otimes \mu} | \ell(W, Z) - \ell(\Theta \Theta^\top W, Z) | \\
        &\leq L \E_{P_{W | \Theta} \otimes P_\Theta} \| W - \Theta \Theta^\top W \| \,, \label{app:rate_dist_proof_02}
    \end{align}
    where \eqref{app:rate_dist_proof_01} follows from the definition of the population risks $\risk{w}$ and $\drisk{\Theta^\top w}{\Theta}$, and \eqref{app:rate_dist_proof_02} results from the assumption that $\ell(\cdot, z) : \mathrm{W} \to \R_+$ is $L$-Lipschitz for all $z \in \mathrm{Z}$.
    
    Using similar arguments, one can show that 
    \begin{align}
        | \E[\riskn{W} - \driskn{\Theta^\top W}{\Theta} ] | &\leq L \E_{P_{W | \Theta} \otimes P_\Theta} \| W - \Theta \Theta^\top W \| \,, 
    \end{align}
    and we conclude that
    \begin{align} \label{app:rate_dist_proof_1}
        | \generror{\mu}{\mathcal{A}} - \generror{\mu}{\mathcal{A}'} | &\leq 2L \E_{P_{W | \Theta} \otimes P_\Theta} \| W - \Theta \Theta^\top W \| \,.
    \end{align}
    The final result follows from bounding \eqref{app:triangleq} using \eqref{app:rate_dist_proof_0} and \eqref{app:rate_dist_proof_1}.
    
\end{proof}

 \begin{proof}[Proof of \Cref{thm:quant_ratedis}]
 Consider $\mathcal{A} : \mathrm{Z}^n \to \mathrm{W}$ and $\mathcal{A}' : \mathrm{Z}^n \to \mathrm{W}_{\Theta, d}$ such that $\mathcal{A}(S_n) = W$ may depend on $\Theta \sim P_\Theta$, and $\mathcal{A}'(S_n) = \Theta \mathcal{Q}(\Theta^\top W)$. Using the same techniques as in the proof of \Cref{thm:rateDis}, we obtain
 \begin{align}
     | \generror{\mu}{\mathcal{A}} | &\leq 2L \E_{P_{W | \Theta} \otimes P_\Theta} \| W - \Theta \mathcal{Q}(\Theta^\top W) \| + | \generror{\mu}{\mathcal{A}'} | \\
     &\leq 2L \E_{P_{W | \Theta} \otimes P_\Theta} \| W - \Theta \mathcal{Q}(\Theta^\top W) \| + C\,\E_{P_\Theta}\left[\sqrt{\frac{\DMI{\mathcal{Q}(\Theta^\top W)}{S_n}{\Theta}}{2n}}\right] \label{eq:proof_quantratedis}
 \end{align}
 where eq.~\eqref{eq:proof_quantratedis} follows from applying \Cref{thm:adapted_xu}. 

 Then, by using the triangle inequality, the fact that $\|\Theta\| = \| \Theta^\top \Theta \| = 1$, and the properties of $\mathcal{Q}$,
 \begin{align}
     &\E_{P_{W | \Theta} \otimes P_\Theta} \| W - \Theta \mathcal{Q}(\Theta^\top W) \| \\
     &\leq \E_{P_{W | \Theta} \otimes P_\Theta} \| W - \Theta \Theta^\top W \| + \E_{P_{W | \Theta} \otimes P_\Theta} \| \Theta \Theta^\top W - \Theta \mathcal{Q}(\Theta^\top W) \| \\
     &\leq \E_{P_{W | \Theta} \otimes P_\Theta} \| W - \Theta \Theta^\top W \| + \E_{P_{W | \Theta} \otimes P_\Theta} \big[ \| \Theta \| \| \Theta^\top W - \mathcal{Q}(\Theta^\top W) \| \big] \\
     &\leq \E_{P_{W | \Theta} \otimes P_\Theta} \| W - \Theta \Theta^\top W \| + \delta \,. 
 \end{align}
 Finally, since $\mathcal{Q}(\Theta^\top W)$ is a discrete random variable and $\| \Theta^\top W \| \leq M$, we use the same arguments as in \Cref{sec:adapted_xu} to bound $\DMI{\mathcal{Q}(\Theta^\top W)}{S_n}{\Theta}$ by $d\log(2M\sqrt{d}/\delta)$.
 
\end{proof}

\subsection{Rate-distortion bounds applied to feedforward neural networks} \label{app:rate_dis_fc}

In the following, we determine the conditions under which feedforward networks meet the assumptions outlined in \Cref{thm:rateDis,thm:quant_ratedis}. Suppose $\setZ = \{(x, y) \in \mathrm{X} \times \{1, \dots, K\}\}$ is the set of feature-label pairs.

For $q \in \mathbb{N}^*$, a $q$-layer feedforward network is characterized by a mapping $f : \mathrm{W} \times \mathrm{X} \to \R^{K}$ such that the output of its $i$-th layer, $X^{(i)}$, satisfies $X^{(1)} \triangleq W^{(1)} X$ and for $i \in \{2, \dots, q\}$, $X^{(i)} \triangleq W^{(i)}(\psi_i(X^{(i-1)}))$, where $\psi_i$ is an activation function applied element-wise, $W^{(i)} \in \R^{d_{out}^{(i)} \times d_{in}^{(i)}}$, and $X$ is the input feature vector.

For any such feedforward network $f$, we denote by $\bar{f} : \mathrm{W}_{\Theta, d} \times \mathrm{X} \to \R^{K}$ a feedforward network with the same architecture as $f$ (\ie, same number of layers and neurons, and same type of activation functions), but with its parameter space restricted to $\mathrm{W}_{\Theta,d}$. Denote by $\bar{X}^{(i)}$ and $\bar{W}^{(i)}$ the output and weight matrix of the $i$-th layer of $\bar{f}$.

\begin{theorem} \label{thm:distortion_fc_outputs}
    Let $f$ be a $q$-layer feedforward neural network. Assume that for $i \in \{2, \dots, q-1 \}$, $\psi_i$ is $\alpha_i$-Lipschitz continuous and $\psi_i({\bf0}) = {\bf0}$.  Assume for $i \in \{1, \dots, q\}$, $\| W^{(i)} \|_2 \leq M$ and $\| \bar{W}^{(i)} \|_2 \leq M$. Then, for $i \in \{1, \dots, q\}$,
    \begin{equation} \label{eq:bound_layer_l}
        \| X^{(i)} - \bar{X}^{(i)} \|_2 \leq M^{i-1} \| X \|_2 \left( \prod_{j=1}^i \alpha_j \right) \sum_{j=1}^i \| W^{(j)} - \bar{W}^{(j)} \|_2 \,,
    \end{equation}
    where $\alpha_1 \triangleq 1$.
\end{theorem}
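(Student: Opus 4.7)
The plan is to prove the bound by induction on $i$, with a small side computation to control $\|\bar{X}^{(i)}\|_2$ at each layer.

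\textbf{Base case ($i=1$).} Since $X^{(1)} = W^{(1)} X$ and $\bar{X}^{(1)} = \bar{W}^{(1)} X$, the operator-norm inequality immediately gives $\|X^{(1)} - \bar{X}^{(1)}\|_2 \leq \|W^{(1)} - \bar{W}^{(1)}\|_2 \|X\|_2$, which matches the claim (with $M^0 = 1$ and $\alpha_1 = 1$).

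\textbf{Side bound on $\|\bar{X}^{(i)}\|_2$.} I first note that since $\psi_j(\mathbf{0}) = \mathbf{0}$ and $\psi_j$ is $\alpha_j$-Lipschitz, one has $\|\psi_j(v)\|_2 \leq \alpha_j \|v\|_2$ for all $v$. Combined with $\|\bar{W}^{(j)}\|_2 \leq M$ and an easy recursion on $j$, this yields $\|\bar{X}^{(i)}\|_2 \leq M^i \|X\|_2 \prod_{j=2}^{i} \alpha_j = M^i \|X\|_2 \prod_{j=1}^{i} \alpha_j$ (using $\alpha_1 = 1$).

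\textbf{Inductive step.} Assume the bound for $i$. For $i+1$, I would use the telescoping decomposition
\begin{align*}
X^{(i+1)} - \bar{X}^{(i+1)} &= W^{(i+1)}\bigl(\psi_{i+1}(X^{(i)}) - \psi_{i+1}(\bar{X}^{(i)})\bigr) \\
&\quad + \bigl(W^{(i+1)} - \bar{W}^{(i+1)}\bigr)\psi_{i+1}(\bar{X}^{(i)}).
\end{align*}
Applying the triangle inequality, submultiplicativity of $\|\cdot\|_2$, the Lipschitz property of $\psi_{i+1}$, the bound $\|W^{(i+1)}\|_2 \leq M$, and the estimate $\|\psi_{i+1}(\bar{X}^{(i)})\|_2 \leq \alpha_{i+1} \|\bar{X}^{(i)}\|_2$, I obtain
\begin{equation*}
\|X^{(i+1)} - \bar{X}^{(i+1)}\|_2 \leq M\alpha_{i+1}\|X^{(i)} - \bar{X}^{(i)}\|_2 + \alpha_{i+1}\|\bar{X}^{(i)}\|_2 \|W^{(i+1)} - \bar{W}^{(i+1)}\|_2.
\end{equation*}
Plugging in the induction hypothesis for the first term and the side bound for $\|\bar{X}^{(i)}\|_2$ in the second term, both summands acquire a common prefactor $M^i \|X\|_2 \prod_{j=1}^{i+1} \alpha_j$, and the remaining sum collapses to $\sum_{j=1}^{i+1} \|W^{(j)} - \bar{W}^{(j)}\|_2$, which is exactly the claim at level $i+1$.

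\textbf{Anticipated difficulty.} The proof is conceptually a routine induction, so there is no single hard step. The main care is bookkeeping: making sure the factor of $\alpha_{i+1}$ produced by the Lipschitz split merges correctly with the inductive prefactor, and that the auxiliary bound on $\|\bar{X}^{(i)}\|_2$ uses $\prod_{j=1}^{i}\alpha_j$ with $\alpha_1=1$ so that the two terms combine with identical coefficients to give a clean single summation.
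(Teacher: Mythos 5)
Your proof is correct and follows essentially the same inductive argument as the paper: the only difference is that your telescoping split pairs the weight difference with $\psi_{i+1}(\bar{X}^{(i)})$ and the full weight $W^{(i+1)}$ with the activation difference, whereas the paper uses the mirror-image pairing (weight difference with $\psi_i(X^{(i-1)})$, and $\bar{W}^{(i)}$ with the activation difference); since both $\|W^{(j)}\|_2$ and $\|\bar{W}^{(j)}\|_2$ are bounded by $M$ and both forward passes obey the same growth estimate $M^{i}\|X\|_2\prod_j\alpha_j$, the two splits are interchangeable. Your bookkeeping of the prefactors checks out, so the argument goes through as written.
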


\begin{proof}
    We prove this result by induction. By definition, $\| X^{(1)} - \bar{X}^{(1)} \|_2 = \| (W^{(1)} - \bar{W}^{(1)}) X \|_2$. Since the spectral norm is consistent with the Euclidean norm,
    $\| X^{(1)} - \bar{X}^{(1)} \|_2 \leq \| W^{(1)} - \bar{W}^{(1)} \|_2 \| X \|_2$, so \eqref{eq:bound_layer_l} is true for $i = 1$.
    
    Now, let $i > 1$ and assume that \eqref{eq:bound_layer_l} holds for $j \in \{1, \dots, i-1\}$. Then,
    \begin{align}
        \| X^{(i)} - \bar{X}^{(i)} \|_2 &= \| W^{(i)} \psi_i(X^{(i-1)}) - \bar{W}^{(i)} \psi_i(\bar{X}^{(i-1)}) \|_2 \\
        &= \| (W^{(i)} - \bar{W}^{(i)}) \psi_i(X^{(i-1)}) + \bar{W}^{(i)} (\psi_i(X^{(i-1)}) - \psi_i(\bar{X}^{(i-1)})) \|_2 \\
        &\leq \| W^{(i)} - \bar{W}^{(i)} \|_2 \| \psi_i(X^{(i-1)}) \|_2 + \| \bar{W}^{(i)} \|_2 \| \psi_i(X^{(i-1)}) - \psi_i(\bar{X}^{(i-1)}) \|_2 \,, \label{eq:induction_00} 
    \end{align}
    where \eqref{eq:induction_00} results from applying the triangle inequality and $\| Mx \|_2  \leq \| M \|_2 \|x\|_2$. Since $\psi_i$ is $\alpha_i$-Lipschitz continuous and $\psi_i({\bf0}) = \bf0$, we obtain
    \begin{align}
        \| X^{(i)} - \bar{X}^{(i)} \|_2 &\leq \alpha_i \left( \| W^{(i)} - \bar{W}^{(i)} \|_2 \| X^{(i-1)} \|_2 + \| \bar{W}^{(i)} \|_2 \| X^{(i-1)} - \bar{X}^{(i-1)} \|_2 \right) \,. \label{eq:induction_0}
    \end{align}
    By recursively using the definition of $X^{(i)}$ and $\| W^{(i)} \psi_i(X^{(i-1)}) \|_2 \leq \alpha_i \| W^{(i)} \|_2  \| X^{(i-1)} \|_2$, one can show that
    \begin{equation}
        \| X^{(i-1)} \|_2 \leq \| X \|_2 \prod_{j=1}^{i-1} \alpha_j \| W^{(j)} \|_2 \leq M^{i-1} \| X \|_2 \prod_{j=1}^{i-1} \alpha_j \,. \label{eq:induction_1}
    \end{equation}
    Additionally, since we assume \eqref{eq:bound_layer_l} holds for $j \in \{1, \dots, i-1\}$,
    \begin{equation}
        \| X^{(i-1)} - \bar{X}^{(i-1)} \|_2 \leq M^{i-2} \| X \|_2 \left( \prod_{j=1}^{i-1} \alpha_j \right) \sum_{j=1}^{i-1} \| W^{(j)} - \bar{W}^{(j)} \|_2 \,.  \label{eq:induction_11}
    \end{equation}
    By plugging \eqref{eq:induction_1} and \eqref{eq:induction_11} in \eqref{eq:induction_0}, we obtain
    \begin{align}
        \| X^{(i)} - \bar{X}^{(i)} \|_2 &\leq \| X \|_2 \left( \prod_{j=1}^{i} \alpha_j \right) \left( M^{i-1} \| W^{(i)} - \bar{W}^{(i)} \|_2 + \| \bar{W}^{(i)} \|_2 M^{i-2} \sum_{j=1}^{i-1} \| W^{(j)} - \bar{W}^{(j)} \|_2  \right)  \label{eq:induction_2} \\
        &\leq M^{i-1} \| X \|_2 \left( \prod_{j=1}^{i} \alpha_j \right) \sum_{j=1}^i \| W^{(j)} - \bar{W}^{(j)} \|_2  \,,
    \end{align}
    which concludes the proof.
    
 \end{proof}

\begin{theorem} \label{thm:rate_dis_bound_fc}
    Let $f$ and $\bar{f}$ be two $q$-layer feedforward neural networks satisfying the assumptions in \Cref{thm:distortion_fc_outputs}. Denote by $\mathcal{A}$ (respectively, $\bar{\mathcal{A}}$) the learning algorithm consisting in training $f$ (resp., $\bar{f}$) using the loss function $\ell : \setW \times \setZ \to \R_+$, where $\setZ = \mathrm{X} \times \{1, \dots, K\}$. Let $\tilde{\ell} : \R^K \times \{1, \dots, K\} \to \R_+$ be the mapping such that for any $w \in \setW$ and $z = (x,y) \in \setZ$, $\ell(w,z) = \tilde{\ell}(f(w,x), y)$ (resp., $\ell(w,z) = \tilde{\ell}(\bar{f}(w,x), y)$). Assume that $\tilde{\ell}$ is $\beta$-Lipschitz w.r.t. the first variable. Suppose additionally that $\forall X \in \mathrm{X}$, $\| X \|_2 \leq R$. Then,
    \begin{equation} 
        | \generror{\mu}{\mathcal{A}} - \generror{\mu}{\bar{\mathcal{A}}} | \leq 2 \beta M^{q-1} R \left( \prod_{i=1}^q \alpha_i \right) \E \Big[ \sum_{i=1}^q \| W^{(i)} - \bar{W}^{(i)} \|_2 \Big] \,.
    \end{equation}
\end{theorem}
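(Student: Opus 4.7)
The strategy mirrors the proof of \Cref{thm:rateDis}: decompose the difference of generalization errors via the triangle inequality, then control the two resulting loss gaps by pushing the Lipschitz continuity of $\tilde{\ell}$ through the network and invoking \Cref{thm:distortion_fc_outputs} to propagate the layerwise weight differences.

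First, I would write
\begin{align*}
|\generror{\mu}{\mathcal{A}} - \generror{\mu}{\bar{\mathcal{A}}}|
&= \big|\E[\risk{W} - \riskn{W}] - \E[\risk{\bar W} - \riskn{\bar W}]\big| \\
&\leq \big|\E[\risk{W} - \risk{\bar W}]\big| + \big|\E[\riskn{W} - \riskn{\bar W}]\big|,
\end{align*}
where the expectations are taken over the joint law of $(W,\bar W, S_n)$ together with an independent $\tilde Z \sim \mu$ in the first term. Since $|\E[\cdot]|\le \E|\cdot|$, both terms reduce to bounding $\E|\ell(W,Z) - \ell(\bar W, Z)|$ for $Z$ either equal to an independent $\tilde Z \sim \mu$ or to a training sample $Z_i$.

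Next, for any fixed $z = (x,y) \in \setZ$ I would use the assumption that $\tilde\ell$ is $\beta$-Lipschitz in its first argument to write
\[
|\ell(W,z) - \ell(\bar W, z)| = |\tilde\ell(f(W,x),y) - \tilde\ell(\bar f(\bar W, x), y)| \leq \beta \, \|f(W,x) - \bar f(\bar W, x)\|_2.
\]
Identifying $f(W,x)=X^{(q)}$ and $\bar f(\bar W, x) = \bar X^{(q)}$ as the final-layer outputs of the two networks on the same input $x$, \Cref{thm:distortion_fc_outputs} applied with $i=q$ yields
\[
\|f(W,x)-\bar f(\bar W, x)\|_2 \leq M^{q-1} \|x\|_2 \Big(\prod_{j=1}^q \alpha_j\Big) \sum_{j=1}^q \|W^{(j)} - \bar W^{(j)}\|_2.
\]
Applying the assumption $\|x\|_2 \leq R$ and taking expectations, each of the two terms from the triangle inequality contributes the same bound
\[
\beta M^{q-1} R \Big(\prod_{j=1}^q \alpha_j\Big) \,\E\Big[\sum_{j=1}^q \|W^{(j)} - \bar W^{(j)}\|_2\Big],
\]
and summing them produces the factor of $2$ in the statement.

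The proof is essentially an accounting exercise once \Cref{thm:distortion_fc_outputs} is in hand, so there is no serious obstacle; the only mild subtlety is making sure that the pointwise Lipschitz bound depends on $x$ only through $\|x\|_2$ and on the weights through $\sum_j \|W^{(j)}-\bar W^{(j)}\|_2$, so that the subsequent expectation over $(W,\bar W, Z)$ (and uniformly over training vs.\ test samples) factors cleanly and yields the final deterministic constants $\beta$, $M^{q-1}$, $R$, and $\prod_i \alpha_i$ outside the expectation.
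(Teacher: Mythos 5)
Your proposal is correct and follows essentially the same route as the paper's proof: the triangle-inequality split into population and empirical risk gaps, the pointwise $\beta$-Lipschitz reduction to $\|f(W,x)-\bar f(\bar W,x)\|_2$, the application of \Cref{thm:distortion_fc_outputs} at the final layer, and the bound $\|x\|_2\le R$ yielding the factor of $2$. No gaps.
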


\begin{proof}
    By definition of the generalization error,
    \begin{align}
        | \generror{\mu}{\mathcal{A}} - \generror{\mu}{\bar{\mathcal{A}}} | &= | \E[\risk{W} - \riskn{W}] - \E[\risk{\bar{W}} - \riskn{\bar{W}}] | \\
        &= | \E [\risk{W} - \risk{\bar{W}} - (\riskn{W} - \riskn{\bar{W}})] | \\
        &\leq \E [| \risk{W} - \risk{\bar{W}} | + | \riskn{W} - \riskn{\bar{W}} |] \label{proof:rate_dis}
    \end{align}
    For any $(W, \bar{W}) \sim P_{W|S_n} \otimes P_{\bar{W}|S_n}$,
    \begin{align}
        | \risk{W} - \risk{\bar{W}} | &= | \E_{Z\sim\mu}[\ell(W,Z)] - \E_{Z\sim\mu}[\ell(\bar{W},Z)]|  \\
        &\leq | \E_{(X,Y) \sim \mu}[\tilde{\ell}(f(W,X),Y) - \tilde{\ell}(\bar{f}(\bar{W},X),Y)] | \\
        &\leq \E_{(X,Y) \sim \mu} \big[ | \tilde{\ell}(f(W,X),Y) - \tilde{\ell}(\bar{f}(\bar{W},X),Y) | \big] \\
        &\leq \beta\,\E \big[ \| f(W,X) - \bar{f}(\bar{W},X) \|_2 \big] \,. \label{proof:rate_dis_0}
    \end{align}
    We bound \eqref{proof:rate_dis_0} using \Cref{thm:distortion_fc_outputs} and we obtain,
    \begin{align}
    | \risk{W} - \risk{\bar{W}} | &\leq \beta M^{q-1} \E \big[ \| X \|_2 \big] \left( \prod_{i=1}^q \alpha_i \right) \sum_{i=1}^q \| W^{(i)} - \bar{W}^{(i)} \|_2  \\
    &\leq \beta M^{q-1} R \left( \prod_{i=1}^q \alpha_i \right) \sum_{i=1}^q \| W^{(i)} - \bar{W}^{(i)} \|_2 \,. \label{proof:rate_dis_1}
    \end{align}
    Similarly,
    \begin{align}
        | \riskn{W} - \riskn{\bar{W}} | &= \left| \frac1n \sum_{i=1}^n \ell(W,Z_i) - \frac1n \sum_{i=1}^n \ell(\bar{W},Z_i) \right| \\
        &= \left| \frac1n \sum_{i=1}^n \tilde{\ell}(f(W,X_i),Y_i) - \frac1n \sum_{i=1}^n \tilde{\ell}(\bar{f}(\bar{W},X_i),Y_i) \right| \\
        &\leq \frac1n \sum_{i=1}^n | \tilde{\ell}(f(W,X_i),Y_i) - \tilde{\ell}(\bar{f}(\bar{W},X_i),Y_i) | \\
        &\leq \frac{\beta}{n} \sum_{i=1}^n \| f(W,X_i) - \bar{f}(\bar{W},X_i) \|_2 \\
        &\leq \frac{\beta}{n} M^{q-1}  \left( \sum_{i=1}^n \| X_i \|_2 \right) \left( \prod_{i=1}^{q} \alpha_i \right) \sum_{i=1}^q \| W^{(i)} - \bar{W}^{(i)} \|_2  \\
        &\leq \beta M^{q-1} R \left( \prod_{i=1}^{q} \alpha_i \right) \sum_{i=1}^q \| W^{(i)} - \bar{W}^{(i)} \|_2 \,. \label{proof:rate_dis_2}
    \end{align}

    We obtain the final result by plugging \eqref{proof:rate_dis_1} and \eqref{proof:rate_dis_2} in \eqref{proof:rate_dis}.

\end{proof}

\section{Additional Experimental Details for \Cref{sec:exp}}

\subsection{Methodological details} \label{app:methodology}

\paragraph{Architecture for MINE.} In all our experiments, the MI terms are estimated with MINE \cite{belghazi2018mine} based on a fully-connected neural network with one single hidden layer of dimension $100$. The network is trained for $200$ epochs and a batch size of $64$, using the Adam optimizer \cite{kingma2017adam} with default parameters (on PyTorch).

\paragraph{Quantization method.} We use the quantization scheme of \citet{lotfi2022pac} with minor modifications. We learn $c = [c_1,...,c_L] \in \R^L$ quantization levels in $16$-precision during training using the straight through estimator, and quantize the weights $W' = [W_1, \cdots, W_d] \in \R^d$ into $\widehat{W}_i = c_{q(i)}$, where $q(i) = \argmin_{k \in \{1, \dots, L\}} |W_i - c_k|$. Post quantization, arithmetic coding is employed for further compression, to take into account the fact that quantization levels are not uniformly distributed in the quantized weights. Denote by $p_k$ the empirical probability of $c_k$. Arithmetic coding uses at most $\lceil d \times H(p) \rceil + 2$ bits, where $H(p) = - \sum_{k=1}^L p_k \log_2 p_k$. The total bit requirement for the quantized weights, the codebook $c$, and the probabilities $(p_1, \dots, p_L)$ is bounded by $\lceil d \times H(p) \rceil + L \times (16 + \lceil \log_2 d \rceil) + 2$.

\subsection{Additional details and empirical results} \label{appendix:NNs}

\begin{figure}[t]
  \centering
  \begin{subfigure}[b]{\textwidth}
    \centering
    \includegraphics[width=.4\textwidth]{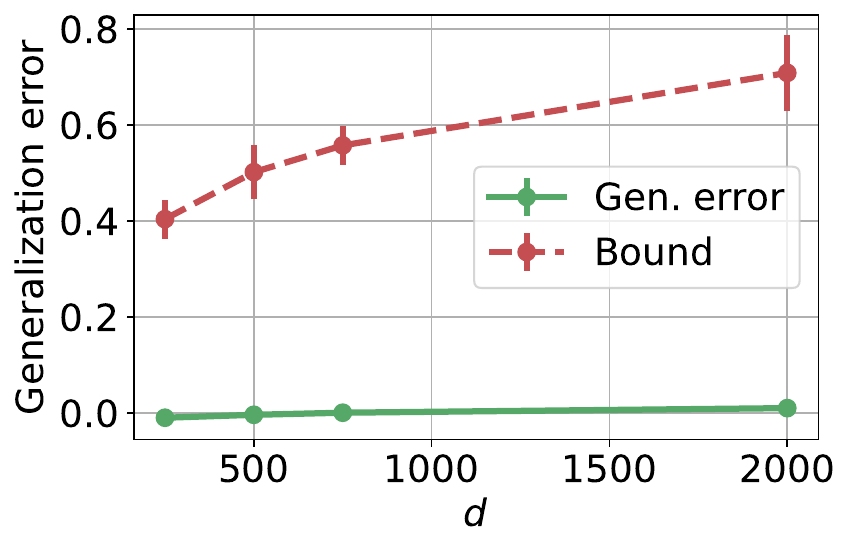}
    \includegraphics[width=.4\textwidth]{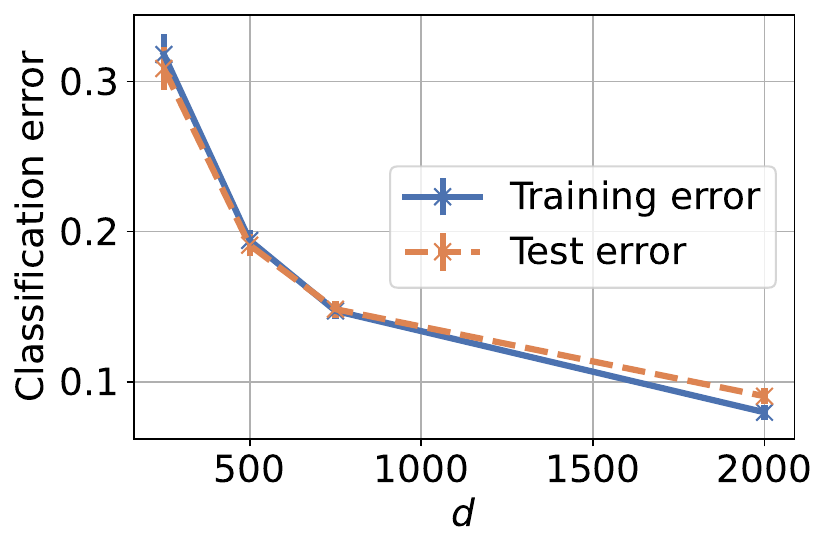}
  \end{subfigure}
  \vspace{-3mm}
  \caption{Generalization bounds on MNIST classification with neural networks trained on $\mathrm{W}_{\Theta, d}$}
  \label{fig:mnist}
\end{figure}

\begin{figure}[t]
  \centering
  \begin{subfigure}[b]{0.4\textwidth}
    \centering
    \includegraphics[width=\textwidth]{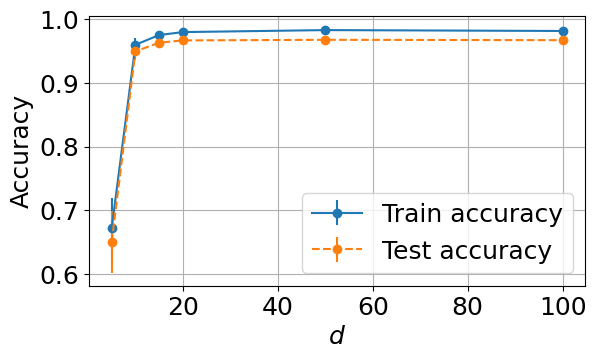}
    \label{fig:image1}
  \end{subfigure}%
  \begin{subfigure}[b]{0.4\textwidth}
    \centering
    \includegraphics[width=\textwidth]{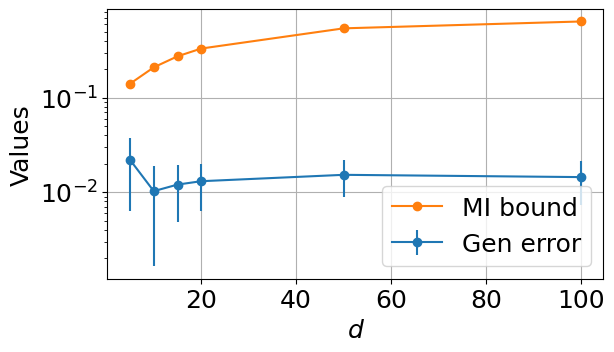}
    \label{fig:image2}
  \end{subfigure}
  \vspace{-3mm}
  \caption{Generalization bounds on Iris dataset classification with neural networks trained on $\mathrm{W}_{\Theta, d}$}
  \label{fig:iris}
\end{figure}

\paragraph{Binary classification with logistic regression (\Cref{subsec:illustration_compress_bounds}).} We consider the binary classification problem solved with logistic regression as described in \citep[{\S}VI]{bu2019}, with features dimension $s = 20$, hence $D = s+1$ (weights and intercept). We train our model on $\mathrm{W}_{\Theta, d}$ for different values of $d < D$, using $n$ training samples. We compute the test error on $\floor{20n/80}$ observations. For each value of $n$ and $d$, we approximate the generalization error for 30 samples of $\Theta$ independently drawn from the SVD-based projector (see \Cref{sec:exp}). We estimate the MI term in the bounds via MINE (with the aforementioned architecture) using 30 samples of $(W', Z_i) \sim P_{W' | S_n, \Theta} \otimes \mu$ for each $\Theta$. 

\paragraph{Classification with NNs (\Cref{subsec:illustration_compress_bounds}).} We consider a feedforward neural network with 2 fully-connected layers of width 200 to classify MNIST \citep{lecun-mnisthandwrittendigit-2010} and CIFAR-10 \citep{krizhevsky2009learning}. The random projections are sampled using the Kronecker product projector, in order to scale better with the high-dimensionality of our models (see \Cref{appendix:NNs}). We train our NNs on $\mathrm{W}_{\Theta, d}$ for different values of $d$, including the intrinsic dimensions reported in \cite{li2018measuring}. We approximate the generalization error for 30 samples of $\Theta$ and estimate our MI-based bounds given by \Cref{thm:genbound}. The MI terms are estimated using MINE over 100 samples of $(W', Z_i) \sim P_{W' | S_n, \Theta} \otimes \mu$ for each $\Theta$. As MINE requires multiple runs, which can be very expensive, we only estimate MI for datasets and models of reasonable sizes: see \Cref{fig:mnist} for results on MNIST. For MNIST and CIFAR-10, we quantize $W'$ and evaluate our quantization-based generalization bounds. To train our NNs, we run Adam \citep{kingma2017adam} with default parameters for 30 epochs and batch size of 64 or 128. 

We also classify the Iris dataset \citep{fisher36lda}. We train a two-hidden-layer NN with width 100 (resulting in $D=10\,903$ parameters) on $\mathrm{W}_{\Theta,d}$. We use Adam with a learning rate of $0.1$ as optimizer, for 200 epochs and batch size of 64. We approximate the generalization error for 20 samples of $\Theta$ independently drawn from the SVD-based projector. We evaluate our generalization bounds (\Cref{thm:bounded_loss}) using MINE over 500 samples of $(W', Z_i) \sim P_{W' | \Theta, S_n} \otimes \mu$ for each $\Theta$. We report results for $d \in \{5, 10, 15, 20, 50, 100\}$ in \Cref{fig:iris}. We obtain over 95\% accuracy at $d=10$ already, and both the best train and test accuracy is achieved for $d=50$. As expected, our bound is an increasing function of $d$ and all of our bounds are non-vacuous.

\paragraph{Influence of the number of quantization levels.} We analyze the influence of the quantization levels $L$ on the generalization error and our bounds in practice. We consider the MNIST classification task with NNs in \Cref{subsec:illustration_compress_bounds} and train for different values of $L$. We report the results in \Cref{fig:quantlevel} for several values of $d$. We observe that for all tested dimensions, the generalization error increases with increasing $L$. Our bound exhibits the same behavior, as anticipated given the dependence on $L$ (see paragraph ``Quantization'' in \Cref{sec:exp}). This experiment illustrates that \emph{(i)} the more aggressive the compression, the better the generalization, \emph{(ii)} our bounds accurately reflect the behavior of the generalization error, and is tighter for lower values of $d$ and $L$. 

\paragraph{Classification with NNs (\Cref{subsec:ratedis_exp}).} We consider a feedforward neural network $f$ with $3$ fully-connected layers and ReLU activations, as formally described in \Cref{app:rate_dis_fc}. We parameterize $f$ with $w = \Theta (\Theta^\top w_1) + \bar{\Theta} (\bar{\Theta}^\top w_2) \in \R^D$, where $\Theta \in \sti(d,D)$ is randomly generated at initialization and $\bar{\Theta} \in \R^{D \times (D-d)}$ is such that $[\Theta, \bar{\Theta}] \in \R^{D \times D}$ forms an orthogonal basis of $\R^D$. The projection matrix $\Theta$ is generated with the sparse projector by \citet{li2018measuring}. Each run consists in randomly selecting a subset of MNIST of $n = 1000$ samples and training $f$ on that dataset for $5$ different samples of $\Theta$. For each $\Theta$, we train for $20$ epochs using the Adam optimizer with a batch size of $256$, learning rate $\eta = 0.01$ for $w_1$ and $\eta/10$ for $w_2$, and other parameters set to their default values \cite{kingma2017adam}. During training, we clamp the norm of each layer's weight matrix at the end of each iteration to satisfy the condition in \Cref{thm:rate_dis_bound_fc}. All hyperparameters, including $C$ and $M$, were chosen so that the neural network trained on MNIST with $d = D$ achieves a training accuracy of at least 99\% in almost all runs.

\begin{figure}[t]
  \centering
    \includegraphics[width=.5\textwidth]{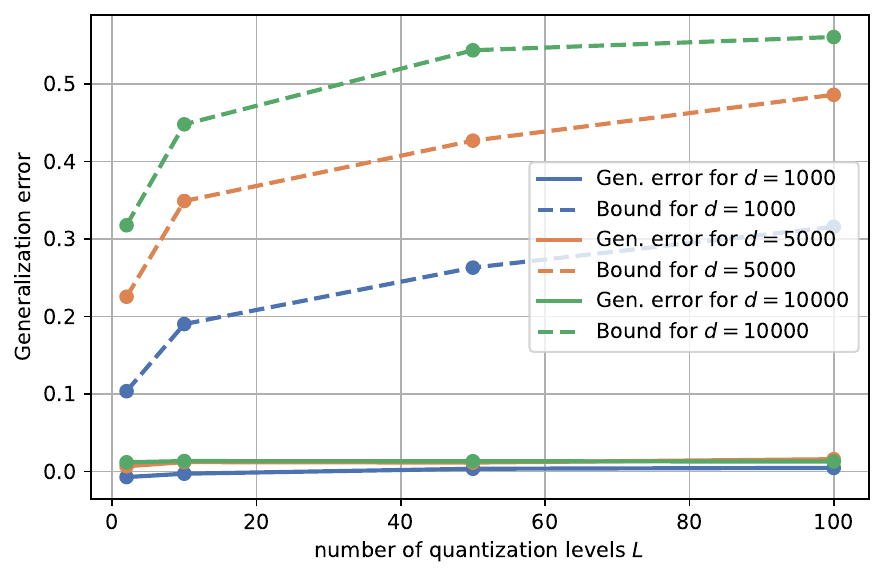}
  \caption{{Influence of the number of quantization levels $L$ on the generalization error and our bounds, for MNIST classification with NNs.}}
  \label{fig:quantlevel}
\end{figure}

\end{document}